\DeclareRobustCommand{\VAN}[3]{#2} %
\newtheorem{theorem}{Theorem}
\newtheorem{lemma}[theorem]{Lemma}
\newcommand{\Mfull}{\M^\textnormal{full}}
\newcommand{\Mdiag}{\M^\textnormal{diag}}
\newcommand{\Dfull}{D^\textnormal{full}}
\newcommand{\Ddiag}{D^\textnormal{diag}}
\newcommand{\Gfull}{G^\textnormal{full}}
\newcommand{\Gdiag}{G^\textnormal{diag}}
\newcommand{\alphafull}{\alpha^\textnormal{full}}
\newcommand{\alphadiag}{\alpha^\textnormal{diag}}
\newcommand{\rtrick}{\tilde{r}}     %
\newcommand{\Rtrick}{\tilde{R}}     %
\DeclareBoldMathCommand{\X}{X}
\newcommand{\citeauthornumber}[2][]{\citeauthor*{#2} \cite[#1]{#2}}
\renewcommand{\U}{\mathcal U}
\renewcommand{\P}{\mathcal P}   %
\let\origlog\log    %
\let\log\ln %
\let\Tr\tr  %
\newcommand{\lambdamax}{\lambda_\text{max}}
\DeclareMathOperator{\sign}{sign}
\title{MetaGrad: Multiple Learning Rates \linebreak in Online Learning}
\author{
  Tim van Erven\\
  Leiden University\\
  \texttt{tim@timvanerven.nl}
  \and
  Wouter M. Koolen\\
  Centrum Wiskunde \& Informatica\\
  \texttt{wmkoolen@cwi.nl}\\
}
\begin{document}

\maketitle

\begin{abstract}
In online convex optimization it is well known that certain subclasses
of objective functions are much easier than arbitrary convex functions.
We are interested in designing adaptive methods that can automatically
get fast rates in as many such subclasses as possible, without any manual
tuning. Previous adaptive methods are able to interpolate between
strongly convex and general convex functions. We present a new method,
MetaGrad, that adapts to a much broader class of functions, including
exp-concave and strongly convex functions, but also various types of
stochastic and non-stochastic functions without any curvature. For
instance, MetaGrad can achieve logarithmic regret on the unregularized
hinge loss, even though it has no curvature, if the data come from a
favourable probability distribution.
MetaGrad's main feature is that it simultaneously considers multiple
learning rates. Unlike previous methods with provable
regret guarantees, however, its learning rates are not monotonically
decreasing over time and are not tuned based on a theoretically derived
bound on the regret. Instead, they are weighted directly proportional to
their empirical performance on the data using a tilted exponential
weights master algorithm.
\end{abstract}

\section{Introduction}

Methods for \emph{online convex optimization} (OCO)
\citep{ShalevShwartz2012,Hazan2016} make it possible to optimize
parameters sequentially, by processing convex functions in a streaming
fashion. This is important in time series prediction where the data are
inherently online; but it may also be convenient to process offline data
sets sequentially, for instance if the data do not all fit into memory
at the same time or if parameters need to be updated quickly when extra
data become available.

The difficulty of an OCO task depends on the convex functions
$f_1,f_2,\ldots,f_T$ that need to be optimized. The argument of these
functions is a $d$-dimensional parameter vector $\w$ from a convex
domain $\U$. Although this is abstracted away in the general framework,
each function $f_t$ usually measures the loss of the parameters on an
underlying example $(\x_t,y_t)$ in a machine learning task. For example,
in classification $f_t$ might be the \emph{hinge loss} $f_t(\w) =
\max\{0,1-y_t \ip{\w}{\x_t}\}$ or the \emph{logistic loss} $f_t(\w) =
\log\del*{1 + e^{-y_t \ip{\w}{\x_t}}}$, with $y_t \in \{-1,+1\}$. Thus
the difficulty depends both on the choice of loss and on the observed
data.

There are different methods for OCO, depending on assumptions that can
be made about the functions. The simplest and most commonly used
strategy is \emph{online gradient descent} (GD), which does not require
any assumptions beyond convexity. GD updates parameters $\w_{t+1} = \w_t
- \eta_t \nabla f_t(\w_t)$ by taking a step in the direction of the
negative gradient, where the step size is determined by a parameter
$\eta_t$ called the \emph{learning rate}. For learning rates $\eta_t
\propto 1/\sqrt{t}$, GD guarantees that the \emph{regret} over $T$
rounds, which measures the difference in cumulative loss between the
online iterates $\w_t$ and the best offline parameters $\u$, is bounded
by $O(\sqrt{T})$ \citep{Zinkevich2003}. Alternatively, if it is known
beforehand that the functions are of an easier type, then better regret
rates are sometimes possible. For instance, if the functions are
\emph{strongly convex}, then logarithmic regret $O(\log T)$ can be
achieved by GD with much smaller learning rates $\eta_t \propto 1/t$
\citep{ons}, and, if they are \emph{exp-concave}, then logarithmic
regret $O(d \log T)$ can be achieved by the \emph{Online Newton Step}
(ONS) algorithm \citep{ons}.

This partitions OCO tasks into categories, leaving it to the user to
choose the appropriate algorithm for their setting. Such a strict
partition, apart from being a burden on the user, depends on an
extensive cataloguing of all types of easier functions that might occur
in practice. (See Section~\ref{sec:fastRateExamples} for several ways in
which the existing list of easy functions can be extended.) It also
immediately raises the question of whether there are cases in between
logarithmic and square-root regret (there are, see
Theorem~\ref{thm:Bernstein} in Section~\ref{sec:fastRateExamples}), and
which algorithm to use then. And, third, it presents the problem that
the appropriate algorithm might depend on (the distribution of) the data
(again see Section~\ref{sec:fastRateExamples}), which makes it entirely
impossible to select the right algorithm beforehand. 

These issues motivate the development of \emph{adaptive} methods, which
are no worse than $O(\sqrt{T})$ for general convex functions, but also
automatically take advantage of easier functions whenever possible. An
important step in this direction are the adaptive GD algorithm of
\citeauthornumber{BartlettHazanRakhlin2007} and its proximal improvement by
\citeauthornumber{Do2009}, which are able to interpolate between strongly convex
and general convex functions if they are provided with a data-dependent
strong convexity parameter in each round, and significantly outperform
the main non-adaptive method (i.e.\ Pegasos,
\citep{Shalev-ShwartzEtAl2011Pegasos}) in
the experiments of \citeauthor{Do2009} Here we consider a significantly richer
class of functions, which includes exp-concave functions, strongly
convex functions, general convex functions that do not change between
rounds (even if they have no curvature), and stochastic functions whose
gradients satisfy the so-called Bernstein condition, which is well-known
to enable fast rates in offline statistical learning
\citep{BartlettMendelson2006,VanErven2015FastRates,AndereNIPSpaper2016}.
The latter group can again include functions without curvature, like the
unregularized hinge loss. All these cases are covered simultaneously by
a new adaptive method we call \emph{MetaGrad}, for \underbar{m}ultiple
\underbar{eta} \underbar{grad}ient algorithm. MetaGrad maintains a
covariance matrix of size $d \times d$ where $d$ is the parameter
dimension. In the remainder of the paper we call this version \emph{full
MetaGrad}. A reference implementation is available from~\cite{MetaGradCode}. We also design and analyze a faster approximation that only
maintains the $d$ diagonal elements, called \emph{diagonal MetaGrad}.
Theorem~\ref{thm:mainbound} below implies the following:
\begin{theorem}\label{thm:roughthm}
Let $\grad_t = \nabla f_t(\w_t)$ and $V_T^\u = \sum_{t=1}^T \del*{(\u - \w_t)^\top \grad_t}^2$. Then the regret of full MetaGrad is
simultaneously bounded by $O(\sqrt{T \log \log T})$, and by
\begin{equation}\label{eqn:roughmainbound}
\sum_{t=1}^T f(\w_t) - \sum_{t=1}^T f_t(\u)
~\le~
\sum_{t=1}^T (\w_t - \u)^\top \grad_t
~\le~
O\del*{
\sqrt{
  V_T^\u\,
  d \ln T
}
+ d \ln T
}
\end{equation}
for any $\u \in \U$.
\end{theorem}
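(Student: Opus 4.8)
The plan is to derive both inequalities in \eqref{eqn:roughmainbound} from Theorem~\ref{thm:mainbound}, which I will treat as a black box. The left inequality is immediate from convexity: for each round $f_t(\w_t) - f_t(\u) \le \grad_t^\top(\w_t - \u)$, and summing over $t$ replaces the excess cumulative loss by the linearized regret $R_T^\u := \sum_{t=1}^T (\w_t - \u)^\top \grad_t$. So it suffices to bound $R_T^\u$ by the right-hand side. Recall the structure of full MetaGrad: a tilted exponential-weights master runs over a finite grid of learning rates $\eta$, each with an ONS-type slave operating on the quadratic surrogate losses $\ell_t^\eta(\w) = -\eta(\w_t-\w)^\top\grad_t + \eta^2\bigl((\w_t-\w)^\top\grad_t\bigr)^2$. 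Theorem~\ref{thm:mainbound} combines the two ingredients of the analysis: (i) because the master predicts the $\eta$-weighted mean of the slave iterates, its per-round \emph{mixability gap} is nonnegative (the one-line verification uses $e^{-x-x^2}\le 1-x$ on the range of $\eta(\w_t-\w_t^\eta)^\top\grad_t$, which the top end of the grid keeps bounded), so the master loses nothing beyond the prior cost $\ln(1/\pi_1^\eta)$ relative to any grid rate; (ii) each $\ell_t^\eta$ is exp-concave, so its slave enjoys the ONS bound $O(d\ln T)$. Evaluating $\sum_t \ell_t^\eta(\u) = -\eta R_T^\u + \eta^2 V_T^\u$ and assembling (i)+(ii) yields, for every $\eta$ in the grid,
\[
R_T^\u ~\le~ \eta\, V_T^\u + \frac{C_\eta}{\eta},\qquad C_\eta = O(d\ln T) + O(\ln\ln T).
\]

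The data-dependent bound then follows by tuning $\eta$. The unconstrained optimiser is $\eta^\star = \sqrt{C/V_T^\u}$ (with $C = O(d\ln T + \ln\ln T)$), giving value $2\sqrt{C\,V_T^\u}$. Since the grid is geometric with ratio $2$, spanning from a constant $\eta_{\max}$ down to $\Theta(1/\sqrt T)$, it contains a rate within a factor $2$ of $\min\{\eta^\star,\eta_{\max}\}$ as long as $\eta^\star$ is not below the smallest grid point; substituting that rate costs only a constant multiplicative factor plus an additive $O(d\ln T)$, the latter also absorbing the two boundary regimes ($\eta^\star$ above $\eta_{\max}$, or $\eta^\star$ below the grid). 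This gives $R_T^\u = O\bigl(\sqrt{V_T^\u\, d\ln T} + d\ln T\bigr)$, the right-hand side of \eqref{eqn:roughmainbound}.

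For the $O(\sqrt{T\log\log T})$ guarantee I would deliberately not pass through the quantity $\sqrt{V_T^\u d\ln T}$, since the only uniform estimate available, $V_T^\u = O(T)$, would degrade it to $\sqrt{T\ln T}$. Instead, I would evaluate the per-$\eta$ bound at the smallest grid rate $\eta_{\min} = \Theta(1/\sqrt T)$: there the exp-concave slave's regret is $\frac{d}{2}\ln\bigl(1+\eta_{\min}^2\cdot O(T)\bigr) = O(1)$ in $T$, so $C_{\eta_{\min}}$ collapses to $O(\ln\ln T)$, and $\eta_{\min}V_T^\u + C_{\eta_{\min}}/\eta_{\min} = O(\sqrt T) + O(\sqrt{T\ln\ln T}) = O(\sqrt{T\log\log T})$.

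The step I expect to be most delicate is not any individual inequality but the bookkeeping around the learning-rate grid: one must pick $\eta_{\max}$ small enough that $e^{-x-x^2}\le 1-x$ holds for every slave in every round, make the grid fine and long enough that discretisation only costs constant factors, and check that $\ln(1/\pi_1^\eta)$ over this grid is genuinely $O(\ln\ln T)$ — this last point is exactly what turns the worst case into $\sqrt{T\log\log T}$ rather than $\sqrt{T\log T}$. Once Theorem~\ref{thm:mainbound} is available, though, all of this is routine; the conceptual content lives in that theorem's master-plus-slaves decomposition.
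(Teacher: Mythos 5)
The first half of your proposal is sound and matches the paper's route: the left inequality is just convexity summed over $t$, and the data-dependent bound is obtained from the per-grid-point inequality of Theorem~\ref{thm:untuned.regret} by tuning $\eta$ over the geometric grid, exactly as in the proof of Theorem~\ref{thm:mainbound}.

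The $O(\sqrt{T\log\log T})$ step, however, contains a genuine error. You propose plugging the \emph{smallest} grid rate $\eta_{\min}=\Theta(1/\sqrt T)$ into $\eta V_T^\u + C_\eta/\eta$ and claim that $C_{\eta_{\min}}/\eta_{\min}=O(\sqrt{T\ln\ln T})$. But $C_{\eta_{\min}}=O(\ln\ln T)$ and $1/\eta_{\min}=\Theta(\sqrt T)$ give $C_{\eta_{\min}}/\eta_{\min}=\Theta(\sqrt{T}\cdot\ln\ln T)$, which is a factor $\sqrt{\ln\ln T}$ too large: $\sqrt T\,\ln\ln T\neq\sqrt{T\ln\ln T}$. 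Your decision to ``deliberately not pass through'' a tuning step is exactly what loses this factor — the two terms $\eta V_T^\u$ and $C_\eta/\eta$ are very unbalanced at $\eta_{\min}$, so you are nowhere near the optimum. A second, smaller issue: you bound the slave's $\ln\det$ term by $\tfrac d2\ln(1+\eta^2\cdot O(T))$, which at a fixed $\eta$ is $O(d)$, not $O(1)$; for a dimension-free $\sqrt{T\log\log T}$ bound you instead need the dimension-free estimate $\tfrac12\ln\det(\I+2\eta^2D^2\S_T)\le\eta^2D^2\tr(\S_T)\le\eta^2D^2G^2T$.

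The fix, and essentially what the paper does for the second claim of Theorem~\ref{thm:mainbound}, is: (i) kill the $\ln\det$ term by either the trace bound just mentioned (it then gets absorbed into the linear-in-$\eta$ term since it is $\le \eta\cdot \eta D^2G^2T$) or by going back to the variational form \eqref{eqn:firstRbound} and taking $\Sigma=\I$, which makes $-\ln\det\Sigma+\tr\Sigma-d=0$; and (ii) use $V_T^\u\le D^2G^2T$ and tune to the balance point $\eta\asymp\sqrt{(\ln\ln T)/T}$, which lies inside the grid since $\eta_{\min}=\Theta(1/\sqrt T)\le\sqrt{(\ln\ln T)/T}\le\Theta(1)=\eta_{\max}$. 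This gives $2\eta D^2G^2T + O(\ln\ln T)/\eta = O(\sqrt{T\ln\ln T})$ as required. In short: you must tune $\eta$ in the worst-case branch just as you did in the data-dependent branch; simply evaluating at the bottom of the grid is off by $\sqrt{\log\log T}$.
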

Theorem~\ref{thm:roughthm} bounds the regret in terms of a measure of variance
$V_T^\u$ that depends on the distance of the algorithm's choices $\w_t$
to the optimum $\u$, and which, in favourable cases, may be
significantly smaller than $T$. Intuitively, this happens, for instance,
when there is stable optimum $\u$ that the algorithm's choices $\w_t$
converge to. Formal consequences are given in
Section~\ref{sec:fastRateExamples}, which shows that this bound implies
faster than $O(\sqrt{T})$ regret rates, often logarithmic in $T$, for
all functions in the rich class mentioned above. In all cases the
dependence on $T$ in the rates matches what we would expect based on
related work in the literature, and in most cases the dependence on the
dimension $d$ is also what we would expect. Only for strongly convex
functions is there an extra factor $d$. It is an open question whether
this is a fundamental obstacle for which an even more general adaptive
method is needed, or whether it is an artefact of our analysis.

The main difficulty in achieving the regret guarantee from
Theorem~\ref{thm:roughthm} is tuning a learning rate parameter $\eta$.
In theory, $\eta$ should be roughly $1/\sqrt{V_T^\u}$, but this is not
possible using any existing techniques, because the optimum $\u$ is unknown in
advance, and tuning in terms of a uniform upper bound $\max_\u V_T^\u$
ruins all desired benefits. MetaGrad therefore runs multiple slave
algorithms, each with a different learning rate, and combines them with
a novel master algorithm that learns the empirically best learning rate
for the OCO task in hand. The slaves are instances of exponential
weights on the continuous parameters $\u$ with a suitable surrogate loss
function, which in particular causes the exponential weights
distributions to be multivariate Gaussians. For the full version of
MetaGrad, the slaves are closely related to the ONS algorithm on the original
losses, where each slave receives the master's gradients instead
of its own. It is shown that $\ceil{\half \origlog_2 T} + 1$ slaves
suffice, which is at most $16$ as long as $T \leq 10^9$, and therefore
seems computationally acceptable. If not, then the number of slaves can
be further reduced at the cost of slightly worse constants in the bound.

\paragraph{Related Work}

If we disregard computational efficiency, then the result of
Theorem~\ref{thm:roughthm} can be achieved by finely discretizing the
domain $\U$ and running the Squint algorithm for prediction with experts
with each discretization point as an expert \citep{squint}. MetaGrad may
therefore also be seen as a computationally efficient extension of
Squint to the OCO setting.

Our focus in this work is on adapting to sequences of functions $f_t$
that are easier than general convex functions. A different direction in
which faster rates are possible is by adapting to the domain $\U$. As we
assume $\U$ to be fixed, we consider an upper bound $D$ on the norm of the
optimum $\u$ to be known. In contrast,
\citeauthor*{OrabonaPal2016} \cite{Orabona2014,OrabonaPal2016} design methods that can
adapt to the norm of $\u$. One may also look at the shape of $\U$. As
can be seen in the analysis of the slaves, MetaGrad is based a spherical
Gaussian prior on $\reals^d$, which favours $\u$ with small
$\ell_2$-norm. This is appropriate for $\U$ that are similar to the
Euclidean ball, but less so if $\U$ is more like a box
($\ell_\infty$-ball). In this case, it would be better to run a copy of
MetaGrad for each dimension separately, similarly to how the diagonal
version of the AdaGrad algorithm \citep{adagrad,McMahanStreeter2010} may
be interpreted as running a separate copy of GD with a separate learning
rate for each dimension. AdaGrad further uses an adaptive tuning of the
learning rates that is able to take advantage of sparse gradient
vectors, as can happen on data with rarely observed features. We briefly
compare to AdaGrad in some very simple simulations in
Appendix~\ref{app:simulations}.

Another notion of adaptivity is explored in a series of work
\cite{hazan2010extracting,GradualVariationInCosts2012,SteinhardtLiang14}
obtaining tighter bounds for
linear functions $f_t$ that vary little between rounds (as measured
either by their deviation from the mean function or by successive
differences). Such bounds imply super fast rates for optimizing a fixed
linear function, but reduce to slow $O(\sqrt{T})$ rates in the other
cases of easy functions that we consider.
Finally, the way MetaGrad's slaves maintain a Gaussian distribution on
parameters $\u$ is similar in spirit to AROW and related confidence
weighted methods, as analyzed by \citeauthornumber{CrammerEtAl2009AROW}
in the mistake bound model.

\paragraph{Outline}

We start with the main definitions in the next section. Then
Section~\ref{sec:fastRateExamples} contains an extensive set of
examples where Theorem~\ref{thm:roughthm} leads to fast rates,
Section~\ref{sec:metagrad} presents the MetaGrad algorithm, and
Section~\ref{sec:analysis} provides the analysis leading to
Theorem~\ref{thm:mainbound}, which is a more detailed statement of
Theorem~\ref{thm:roughthm} with an improved dependence on the dimension
in some particular cases and with exact constants. The details of the
proofs can be found in the appendix.

\section{Setup}

\begin{algorithm2e}[t]
\renewcommand{\algorithmicrequire}{\textbf{Input:}}
\renewcommand{\algorithmiccomment}[1]{\hfill $\triangleright$~\textit{#1}}
\begin{algorithmic}[1]
\REQUIRE Convex set $\U$
\FOR{$t=1,2,\ldots$}
  \STATE Learner plays $\w_t \in \U$
  \STATE Environment reveals convex loss function $f_t : \U \to \reals$
  \STATE Learner incurs loss $f_t(\w_t)$ and observes (sub)gradient $\grad_t = \nabla f_t(\w_t)$
\ENDFOR
\end{algorithmic}
\SetAlgorithmName{Protocol}{Protocol}{List of Protocols}
\caption{Online Convex Optimization from First-order Information}\label{alg:OCOprotocol}
\end{algorithm2e}
\setcounter{algocf}{0}  %

Let $\U \subseteq \reals^d$ be a closed convex set, which we assume 
contains the origin $\zeros$ (if not, it can always be translated). We
consider algorithms for Online Convex Optimization over $\U$, which
operate according to the protocol displayed in
Protocol~\ref{alg:OCOprotocol}. Let $\w_t \in \U$ be the  iterate
produced by the algorithm in round $t$, let  $f_t : \U \to \reals$ be the
convex loss function produced by the environment and let $\grad_t =
\nabla f_t(\w_t)$ be the (sub)gradient, which is the feedback given to the
algorithm.\footnote{If $f_t$ is not differentiable at $\w_t$, any choice of subgradient $\grad_t \in \partial
f_t(\w_t)$ is allowed.} We abbreviate the
\emph{regret} with respect to $\u \in \U$ as $R_T^\u = \sum_{t=1}^T
\del*{f_t(\w_t) - f_t(\u)}$, and define our measure of variance as
$V_T^\u = \sum_{t=1}^T \del*{(\u - \w_t)^\top \grad_t}^2$ for the full
version of MetaGrad and $V_T^\u = \sum_{t=1}^T \sum_{i=1}^d (u_i -
w_{t,i})^2 \grads_{t,i}^2$ for the diagonal version. By convexity of
$f_t$, we always have $f_t(\w_t) - f_t(\u) \leq (\w_t - \u)^\top
\grad_t$. Defining $\Rtrick_T^\u =
\sum_{t=1}^T(\w_t - \u)^\top \grad_t$, this
implies the first inequality in
Theorem~\ref{thm:roughthm}: $R_T^\u \leq \Rtrick_T^\u$. A stronger requirement than
convexity is that a function $f$ is \emph{exp-concave}, which (for
exp-concavity parameter $1$) means that $e^{-f}$ is concave.
Finally, we impose the following standard boundedness assumptions,
distinguishing between the full version of MetaGrad (left column) and
the diagonal version (right column): for all $\u, \v \in \U$, all
dimensions $i$ and all times $t$,
\begin{align}
\notag
   & \text{full} & & \text{diag}
\\
\label{eq:B}
  \norm{\u - \v} &~\leq~ \Dfull
    & |u_i - v_i| &~\leq~ \Ddiag
\\
\notag
  \norm{\grad_t} &~\leq~ \Gfull
    & |\grads_{t,i}| &~\leq~ \Gdiag.
\end{align}
Here, and throughout the paper, the norm of a vector (e.g.\
$\|\grad_t\|$) will always refer to the $\ell_2$-norm. For the full
version of MetaGrad, the Cauchy-Schwarz inequality further implies that
$(\u - \v)^\top \grad_t \leq \|\u - \v\| \cdot \|\grad_t\| \leq \Dfull
\Gfull$. 

\section{Fast Rate Examples}\label{sec:fastRateExamples}

In this section, we motivate our interest in the adaptive bound
\eqref{eqn:roughmainbound} by giving a series of examples in which it
provides fast rates. These fast rates are all derived from two general
sufficient conditions: one based on the directional derivative of the
functions $f_t$ and one for stochastic gradients that satisfy the
\emph{Bernstein condition}, which is the standard condition for fast
rates in off-line statistical learning. Simple simulations that
illustrate the conditions are provided in Appendix~\ref{app:simulations}
and proofs are also postponed to
Appendix~\ref{app:MoreFastRateExamplesAndProofs}.

\paragraph{Directional Derivative Condition}

In order to control the regret with respect to some point $\u$, the
first condition requires a quadratic lower bound on the curvature of the
functions $f_t$ in the direction of $\u$:
\begin{theorem}\label{thm:curvedfunctions}
Suppose, for a given $\u \in \U$, there exist constants $a,b > 0$ such
that the functions $f_t$ all satisfy
\begin{equation}\label{eqn:curvedfunctions}
  f_t(\u) \geq f_t(\w) + a (\u - \w)^\top \nabla f_t(\w) + b \del*{(\u - \w)^\top \nabla f_t(\w)}^2
  \qquad \text{for all $\w \in \U$.}
\end{equation}
Then any method with regret bound \eqref{eqn:roughmainbound} incurs
logarithmic regret, $R_T^\u = O(d \ln T)$, with respect to $\u$.
\end{theorem}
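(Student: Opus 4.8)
The plan is to chain the directional derivative condition with the assumed regret bound \eqref{eqn:roughmainbound} and then optimize away the (unknown, data-dependent) variance term $V_T^\u$.

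First I would instantiate the hypothesis \eqref{eqn:curvedfunctions} at $\w = \w_t$, so that $\nabla f_t(\w_t) = \grad_t$. Abbreviating $r_t = (\w_t - \u)^\top \grad_t$ and using $(\u - \w_t)^\top \grad_t = -r_t$, inequality \eqref{eqn:curvedfunctions} rearranges to $f_t(\w_t) - f_t(\u) \le a\, r_t - b\, r_t^2$. Summing over $t = 1,\dots,T$ and recalling that $\Rtrick_T^\u = \sum_{t=1}^T r_t$ and $V_T^\u = \sum_{t=1}^T r_t^2$, this yields
\[
  R_T^\u ~\le~ a\, \Rtrick_T^\u - b\, V_T^\u .
\]
Next I would substitute the assumed bound \eqref{eqn:roughmainbound}, which gives $\Rtrick_T^\u \le C\del*{\sqrt{V_T^\u\, d \ln T} + d \ln T}$ for a constant $C$ depending only on the ambient constants $\Dfull,\Gfull$ (and in particular not on $\u$ or on the data), to obtain
\[
  R_T^\u ~\le~ aC\sqrt{V_T^\u\, d \ln T} + aC\, d \ln T - b\, V_T^\u .
\]

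Finally, I would view the right-hand side as a function of $V_T^\u \ge 0$: it is a downward parabola in $\sqrt{V_T^\u}$, maximized at $\sqrt{V_T^\u} = aC\sqrt{d\ln T}/(2b)$, with maximal value $\frac{a^2 C^2}{4b}\, d \ln T + aC\, d \ln T$. Since this upper bound holds whatever the actual value of $V_T^\u$ turns out to be, we conclude $R_T^\u = O(d \ln T)$, with the hidden constant depending only on $a$, $b$, $\Dfull$ and $\Gfull$.

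There is no real obstacle here; the argument is short once the bookkeeping is right. The one point that deserves care is conceptual rather than technical: $V_T^\u$ is never estimated, it is simply eliminated by noting that the combined bound is concave in $\sqrt{V_T^\u}$ and hence uniformly controlled. One should also double-check that the constant in \eqref{eqn:roughmainbound} does not secretly depend on the comparator $\u$, so that the resulting $O(d\ln T)$ is genuinely uniform over all $\u$ satisfying \eqref{eqn:curvedfunctions}. (An entirely analogous computation, replacing $\Dfull,\Gfull$ by $\Ddiag,\Gdiag$ and the full definition of $V_T^\u$ by the diagonal one, would cover the diagonal version.)
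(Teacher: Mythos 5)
Your proof is correct and is essentially the same as the paper's: you instantiate the directional derivative condition at $\w = \w_t$ to get $R_T^\u \le a\Rtrick_T^\u - b V_T^\u$, plug in the bound \eqref{eqn:roughmainbound}, and then eliminate the variance term by observing that the resulting expression is a downward parabola in $\sqrt{V_T^\u}$. The paper phrases the last step as an AM--GM inequality with a free parameter $\gamma$ (choosing $\gamma = 2b/C$ to cancel the $-bV_T^\u$ term), but that is just a cosmetic repackaging of your direct maximization.
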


The case $a=1$ of this condition was introduced by \citeauthornumber{ons}, who show
that it is satisfied for all $\u \in \U$ by exp-concave and strongly
convex functions. The rate $O(d \log T)$ is also what we would expect by
summing the asymptotic offline rate obtained by ridge regression on the
squared loss \citep[Section~5.2]{SrebroEtAl2010}, which is exp-concave.
Our extension to $a > 1$ is technically a minor step, but it makes the
condition much more liberal, because it may then also be satisfied by
functions that do \emph{not} have any curvature. For example, suppose
that $f_t = f$ is a fixed convex function that does not change with $t$.
Then, when $\u^* = \argmin_\u f(\u)$ is the offline minimizer, we have
$(\u^* - \w)^\top \nabla f(\w) \in \intcc{-\Gfull \Dfull,0}$, so that
\begin{align*}
  f(\u^*) - f(\w)
    &\geq (\u^* - \w)^\top \nabla f(\w)
\\
    &\geq 2 (\u^* - \w)^\top \nabla f(\w) + \frac{1}{\Dfull \Gfull} \del*{(\u^* - \w)^\top
    \nabla f(\w)}^2,
\end{align*}
where the first inequality uses only convexity of $f$. Thus condition
\eqref{eqn:curvedfunctions} is satisfied by \emph{any fixed convex
function}, even if it does not have any curvature at all, with $a =
2$ and $b=1/(\Gfull \Dfull)$.

\paragraph{Bernstein Stochastic Gradients}

The possibility of getting fast rates even without any curvature is
intriguing, because it goes beyond the usual strong convexity or
exp-concavity conditions. In the online setting, the case of fixed
functions $f_t = f$ seems rather restricted, however, and may in fact be
handled by offline optimization methods. We therefore seek to loosen
this requirement by replacing it by a stochastic condition on the
distribution of the functions $f_t$. The relation between variance
bounds like Theorem~\ref{thm:roughthm} and fast rates in the stochastic
setting is studied in depth by \citeauthornumber{AndereNIPSpaper2016}, who obtain
fast rate results both in expectation and in probability. Here we
provide a direct proof only for the expected regret, which allows a
simplified analysis.

Suppose the functions $f_t$ are independent and identically
distributed (i.i.d.), with common distribution $\pr$. Then we say that
the gradients satisfy the \emph{$(B,\beta)$-Bernstein condition} with
respect to the stochastic optimum 
\[
\u^* = \argmin_{\u \in \U} \E_{f \sim
\pr}[f(\u)]
\]
if, for all $\w \in \U$,
\begin{equation}\label{eqn:bernstein}
(\w - \u^*)^\top
\ex_f \sbr*{
  \nabla f(\w) \nabla f(\w)^\top
}
(\w - \u^*)
~\le~
B
\big((\w - \u^*)^\top  \ex_f \sbr*{\nabla f(\w)}\big)^\beta
.
\end{equation}
This is an instance of the well-known Bernstein condition from offline
statistical learning
\citep{BartlettMendelson2006,VanErven2015FastRates}, applied to the
linearized excess loss $(\w - \u^*)^\top \nabla f(\w)$.
As shown in Appendix~\ref{sec:bnst}, imposing the condition for the
linearized excess loss is a weaker requirement than imposing it for the
original excess loss $f(\w) - f(\u^*)$.

\begin{theorem}\label{thm:Bernstein}
If the gradients satisfy the $(B,\beta)$-Bernstein condition for $B > 0$
and $\beta \in (0,1]$ with respect to $\u^* = \argmin_{\u \in \U} \E_{f
\sim \pr}[f(\u)]$, then any method with regret bound
\eqref{eqn:roughmainbound} incurs expected regret
\[
\E\sbr{R_T^{\u^*}} =
O\del*{\del*{B d \ln T}^{1/(2-\beta)} T^{(1-\beta)/(2-\beta)}
    + d\ln T}.
\]
\end{theorem}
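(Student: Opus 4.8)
The plan is to combine the adaptive regret bound \eqref{eqn:roughmainbound} with the Bernstein condition by taking expectations and bootstrapping. First I would apply \eqref{eqn:roughmainbound} at $\u = \u^*$, so that $R_T^{\u^*} \le \Rtrick_T^{\u^*} \le O(\sqrt{V_T^{\u^*} d\ln T} + d\ln T)$ surely (on every sample path), where $V_T^{\u^*} = \sum_{t=1}^T ((\u^* - \w_t)^\top\grad_t)^2$. Taking expectations and using Jensen's inequality on the concave square root gives $\E[R_T^{\u^*}] \le O(\sqrt{\E[V_T^{\u^*}]\, d\ln T} + d\ln T)$, so the whole problem reduces to bounding $\E[V_T^{\u^*}]$ in terms of $\E[R_T^{\u^*}]$.

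The key step is to control $\E[V_T^{\u^*}]$ using the Bernstein condition. Since the $f_t$ are i.i.d.\ and $\w_t$ is determined by $f_1,\dots,f_{t-1}$ (hence independent of $f_t$), I would condition on $\w_t$ and apply \eqref{eqn:bernstein} with $\w = \w_t$: this yields $\E[((\u^* - \w_t)^\top\grad_t)^2 \mid \w_t] \le B\,\big((\w_t - \u^*)^\top \E_f[\nabla f(\w_t)]\big)^\beta$. Now $(\w_t - \u^*)^\top \E_f[\nabla f(\w_t)] = \E_f[(\w_t - \u^*)^\top \nabla f(\w_t)] \ge \E_f[f(\w_t) - f(\u^*)]$ is exactly the expected instantaneous regret contribution (using convexity and $\u^* = \argmin \E_f[f]$, so the directional derivative is nonnegative). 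Summing over $t$, taking total expectations, and using Jensen's inequality for the concave map $x \mapsto x^\beta$ (valid since $\beta \in (0,1]$) gives
\[
\E[V_T^{\u^*}] \;\le\; B \sum_{t=1}^T \E\big[(\w_t - \u^*)^\top\grad_t\big]^{\beta}
\;\le\; B\, T^{1-\beta} \Big(\sum_{t=1}^T \E\big[(\w_t - \u^*)^\top\grad_t\big]\Big)^{\beta}
\;=\; B\, T^{1-\beta}\, \E[\Rtrick_T^{\u^*}]^{\beta}.
\]

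Combining the two displays, writing $\Rtrick := \E[\Rtrick_T^{\u^*}]$ (which upper-bounds $\E[R_T^{\u^*}]$ and is what I would actually bound), gives a self-referential inequality of the form $\Rtrick \le c_1\sqrt{B T^{1-\beta} \Rtrick^{\beta} d\ln T} + c_2 d\ln T$. Solving this for $\Rtrick$ is the final routine step: split into the two cases according to which term dominates. If the first term dominates, then $\Rtrick \le 2c_1 (B d\ln T)^{1/2} T^{(1-\beta)/2} \Rtrick^{\beta/2}$, i.e.\ $\Rtrick^{1-\beta/2} \le O((Bd\ln T)^{1/2} T^{(1-\beta)/2})$, which rearranges to $\Rtrick = O((Bd\ln T)^{1/(2-\beta)} T^{(1-\beta)/(2-\beta)})$ after raising to the power $2/(2-\beta)$; if the second term dominates, $\Rtrick = O(d\ln T)$. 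Taking the max of the two cases yields the claimed bound.

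The main obstacle I anticipate is not any single inequality but making the probabilistic bookkeeping airtight: one must be careful that $\w_t$ is indeed independent of $f_t$ (so the conditional expectation argument is legitimate), that the surely-valid regret bound \eqref{eqn:roughmainbound} may be used pathwise before taking expectations, and that the two applications of Jensen (concavity of $\sqrt{\cdot}$ and of $x^\beta$) are deployed in the right direction. The algebra of solving the self-bounding inequality is standard, but one should track constants if exact constants are desired; here only the $O(\cdot)$ form is claimed, so a clean case split suffices.
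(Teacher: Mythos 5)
Your proof is correct and follows essentially the same route as the paper's: apply \eqref{eqn:roughmainbound} at $\u^*$, push the expectation under the square root by Jensen, use independence of $\w_t$ and $f_t$ together with the Bernstein condition and Jensen on $x\mapsto x^\beta$ to obtain $\E[V_T^{\u^*}] \le B\sum_t\bigl(\E[(\w_t-\u^*)^\top\grad_t]\bigr)^\beta$, and then solve the resulting self-bounding inequality. The only (cosmetic) divergence is in the last step: you invoke the power-mean inequality $\sum_t a_t^\beta\le T^{1-\beta}(\sum_t a_t)^\beta$ and close with a two-case split, whereas the paper instead linearizes $x^\beta$ term by term via the variational identity $x^\alpha y^{1-\alpha}=c_\alpha\inf_{\gamma>0}(x/\gamma+\gamma^{\alpha/(1-\alpha)}y)$ and optimizes free parameters; both yield the stated $O(\cdot)$ rate.
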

\noindent
For $\beta=1$, the rate becomes $O(d
\ln T)$, just like for fixed functions, and for smaller $\beta$ it is in
between logarithmic and $O(\sqrt{d T})$.
For instance, the hinge loss on the unit ball with i.i.d.\ data satisfies the Bernstein condition with $\beta = 1$, which implies
an $O(d \log T)$ rate. (See Appendix~\ref{app:hingeLossExample}.) It is
common to add $\ell_2$-regularization to the hinge loss to make it
strongly convex, but this example shows that that is not necessary
to get logarithmic regret.

\section{MetaGrad Algorithm}\label{sec:metagrad}

In this section we explain the two versions (full and diagonal) of the
MetaGrad algorithm. We will make use of the following definitions:
\begin{align}
\notag
   & \text{full} & & \text{diag}
\\
\label{eq:M.choices}
  \Mfull_t &~\df~ \grad_t \grad_t^\top
    & \Mdiag_t &~\df~ \diag(\grads_{t,1}^2, \ldots, \grads_{t,d}^2)
\\
\notag
  \alphafull &~\df~ 1
    & \alphadiag &~\df~ 1/d.
\end{align}
Depending on context, $\w_t \in \U$ will refer to the full or diagonal
MetaGrad prediction in round $t$. In the remainder we will drop the
superscript from the letters above, which will always be clear from
context.

MetaGrad will be defined by means of the following \emph{surrogate loss}
$\surr_t^\eta(\u)$, which depends on a parameter $\eta > 0$ that trades off \emph{regret} compared to $\u$ with
the square of the scaled directional derivative towards $\u$ (full case)
or its approximation (diag case):
\begin{equation}\label{eq:surrogate}
\surr_t^\eta(\u)
~\df~
- \eta(\w_t-\u)^\top \grad_t
+ \eta^2  (\u - \w_t)^\top \M_t  (\u - \w_t)
.
\end{equation}
Our surrogate loss consists of a linear and a quadratic part. 
Using the
language of \citeauthornumber{Orabona2015}, the data-dependent quadratic part
causes a ``time-varying regularizer'' and \citeauthornumber{adagrad} would call
it ``temporal adaptation of the proximal function''.
The
sum of quadratic terms in our surrogate is what appears in the regret
bound of Theorem~\ref{thm:roughthm}.

The MetaGrad algorithm is a two-level hierarchical construction,
displayed as Algorithms~\ref{alg:MetaGradMaster} (master algorithm that
learns the learning rate) and~\ref{alg:MetaGradSlave} (sub-module, a
copy running for each learning rate $\eta$ from a finite grid). Based on our
analysis in the next section, we recommend using the grid in
\eqref{eqn:grid}.

\begin{algorithm2e}[t]
\renewcommand{\algorithmicrequire}{\textbf{Input:}}
\renewcommand{\algorithmiccomment}[1]{\hfill $\triangleright$~\textit{#1}~~~~~~}
\begin{algorithmic}[1]
\REQUIRE Grid of learning rates $\frac{1}{5 D G} \geq \eta_1 \ge \eta_2 \ge \ldots$ with prior weights $\pi_1^{\eta_1}, \pi_1^{\eta_2}, \ldots$ \COMMENT{As in \eqref{eqn:grid}}
\FOR{$t=1,2,\ldots$}
\STATE Get prediction $\w_t^\eta \in \U$ of slave (Algorithm~\ref{alg:MetaGradSlave}) for each $\eta$
\STATE\label{line:tilted.ewa}
Play
$
\w_t
=
\frac{
  \sum_\eta \pi_t^\eta \eta \w^\eta_t
}{
  \sum_\eta \pi_t^\eta \eta \phantom{\w^\eta_t}
}
\in \U
$
\COMMENT{Tilted Exponentially Weighted Average}
\STATE\label{lin:gradien.trick}
Observe gradient $\grad_t = \nabla f_t(\w_t)$
\STATE\label{line:expw}
Update $\pi_{t+1}^\eta = \frac{\pi_t^\eta e^{-\alpha
\surr_t^\eta(\w_t^\eta)}}{\sum_\eta \pi_t^\eta e^{-\alpha\surr_t^\eta(\w_t^\eta)}}$ for all $\eta$
\COMMENT{\parbox[t]{\widthof{Exponential Weights with}}{Exponential Weights with surrogate loss \eqref{eq:surrogate}}}
\ENDFOR
\end{algorithmic}
\caption{MetaGrad Master}\label{alg:MetaGradMaster}
\end{algorithm2e}

\begin{algorithm2e}
\renewcommand{\algorithmicrequire}{\textbf{Input:}}
\renewcommand{\algorithmiccomment}[1]{\hfill $\triangleright$~\textit{#1}~~~~~~}%
\begin{algorithmic}[1]
\REQUIRE Learning rate $0 < \eta \leq \frac{1}{5 D G}$, domain size $D > 0$
\STATE $\w^\eta_1 = \zeros$ and $\Sigma^\eta_1 = D^2 \I$
\FOR{$t=1,2,\ldots$}
\STATE Issue $\w_t^\eta$ to master (Algorithm~\ref{alg:MetaGradMaster})
\STATE Observe gradient $\grad_t = \nabla f_t(\w_t)$ \COMMENT{Gradient at \emph{master} point $\w_t$}
\STATE\label{line:md}
Update 
$
\begin{aligned}[t]
\Sigma^\eta_{t+1} &= \textstyle \del*{\frac{1}{D^2} \I + 2 \eta^2 \sum_{s=1}^t \M_s}^{-1}
\\
\widetilde{\w}_{t+1}^\eta &=  \w^\eta_t -  \Sigma^\eta_{t+1} \del*{\eta \grad_t + 2 \eta^2 \M_t (\w_t^\eta - \w_t)}
\\
\w_{t+1}^\eta &= \Pi_{\U}^{\Sigma_{t+1}^\eta} \del*{ \widetilde{\w}^\eta_{t+1}}
\end{aligned}$
\vspace{.3em}\linebreak
\mbox{}~~~with projection $\Pi_\U^{\Sigma}(\w) = \argmin_{\u \in \U} (\u - \w)^\top
\Sigma^{-1} (\u - \w)$
\ENDFOR
\end{algorithmic}

\smallskip
Implementation: For $\M_t = \Mdiag_t$ only maintain diagonal of $\Sigma_t^\eta$.
For $\M_t = \Mfull_t$ use rank-one update  $\Sigma^\eta_{t+1} =
\Sigma^\eta_t -  \frac{2 \eta^2\Sigma_t^\eta \grad_t \grad_t^\top
\Sigma_t^\eta}{1 + 2 \eta^2 \grad_t^\top \Sigma_t^\eta \grad_t}$ and simplify
$\widetilde{\w}_{t+1}^\eta =  \w^\eta_t -  \eta \Sigma^\eta_{t+1} \grad_t \del*{1 + 2 \eta\grad_t^\top (\w_t^\eta - \w_t)}$.

\caption{MetaGrad Slave}\label{alg:MetaGradSlave}
\end{algorithm2e}

\paragraph{Master}

The task of the Master Algorithm~\ref{alg:MetaGradMaster} is to learn
the empirically best learning rate $\eta$ (parameter of the surrogate
loss $\surr_t^\eta$), which is notoriously difficult to
track online because the regret is non-monotonic over rounds and may
have multiple local minima as a function of $\eta$ (see
\citep{learning.learning.rate} for a study in the expert setting). The
standard technique is therefore to derive a monotonic upper bound on the
regret and tune the learning rate optimally \emph{for the bound}. In
contrast, our approach, inspired by the approach for combinatorial games
of \citeauthornumber[Section~4]{squint}, is to have our master aggregate the
predictions of a discrete grid of learning rates. Although we provide a
formal analysis of the regret, the master algorithm does not depend on
the outcome of this analysis, so any slack in our bounds does not feed
back into the algorithm. The master is in fact very similar to the
well-known exponential weights method (line~\ref{line:expw}), run on the
surrogate losses, except that in the predictions the weights of the
slaves are \emph{tilted} by their learning rates
(line~\ref{line:tilted.ewa}), having the effect of giving a larger
weight to larger $\eta$. The internal parameter $\alpha$ is set to
$\alphafull$ from \eqref{eq:M.choices} for the full version of the
algorithm, and to $\alphadiag$ for the diagonal version.

\paragraph{Slaves}

The role of the Slave Algorithm~\ref{alg:MetaGradSlave} is to guarantee
small surrogate regret for a fixed learning rate $\eta$. We consider two
versions, corresponding to whether we take rank-one or diagonal matrices
$\M_t$ (see \eqref{eq:M.choices}) in the surrogate \eqref{eq:surrogate}.
The first version maintains a \emph{full} $d \times d$ covariance matrix
and has the best regret bound. The second version uses only
\emph{diagonal} matrices (with $d$ non-zero entries), thus trading off a
weaker bound with a better run-time in high dimensions.
Algorithm~\ref{alg:MetaGradSlave} presents the update equations in a computationally efficient form. Their intuitive motivation is given in the proof of Lemma~\ref{lem:surrogateregret}, where we show that the standard exponential weights method with Gaussian prior and  surrogate losses $\surr_t^\eta(\u)$ yields Gaussian posterior with mean $\w_t^\eta$ and covariance matrix $\Sigma_t^\eta$.
The full version of MetaGrad is closely related to the Online Newton Step
algorithm \citep{ons} running on the original losses $f_t$: the differences are that
each Slave receives the Master's gradients $\grad_t = \nabla f_t(\w_t)$
instead of its own $\nabla f_t(\w_t^\eta)$, and that an additional term $2 \eta^2 \M_t (\w_t^\eta - \w_t)$ in line~\ref{line:md} adjusts for the difference between the Slave's parameters $\w_t^\eta$ and the Master's parameters $\w_t$. MetaGrad is
therefore a bona fide first-order algorithm that only accesses $f_t$
through $\grad_t$.  We also note that we have chosen the Mirror Descent
version that iteratively updates and projects (see line~\ref{line:md}).
One might alternatively consider the Lazy Projection version (as in
\cite{Zinkevich2004,Nesterov2009,Xiao2010}) that forgets past
projections when updating on new data. Since projections are typically
computationally expensive, we have opted for the Mirror Descent version,
which we expect to project less often, since a projected point seems
less likely to update to a point outside of the domain than an
unprojected point.

\paragraph{Total run time}

As mentioned, the running time is dominated by the slaves.
Ignoring the projection, a slave with full covariance matrix takes
$O(d^2)$ time to update, while slaves with diagonal covariance matrix
take $O(d)$ time. If there are $m$ slaves, this makes the overall
computational effort respectively $O(md^2)$ and $O(md)$, both in time
per round and in memory.
Our analysis below indicates that $m = 1 + \ceil{\half \origlog_2 T}$
slaves suffice, so $m \leq 16$ as long as $T \leq 10^9$. In addition, each slave may incur the cost of a projection, which
depends on the shape of the domain $\U$. To get a sense for the
projection cost we consider a typical example. For the Euclidean ball a
diagonal projection can be performed using a few iterations of Newton's
method to get the desired precision. Each such iteration costs $O(d)$
time. This is generally considered affordable. For full projections the
story is starkly different. We typically reduce to the diagonal case by
a basis transformation, which takes $O(d^3)$ to compute using SVD. Hence
here the projection dwarfs the other run time by an order of magnitude.
We refer to \cite{adagrad} for examples of how to compute projections
for various domains $\U$. Finally, we remark that a potential speed-up
is possible by running the slaves in parallel.

\section{Analysis}\label{sec:analysis}

We conduct the analysis in three parts. We first discuss the master,
then the slaves and finally their composition. The idea is the
following. The master guarantees for all $\eta$ simultaneously that
\begin{subequations}
\label{eq:plan}
\begin{equation}\label{eq:master.plan}
0
~=~
\sum_{t=1}^T \surr_t^\eta(\w_t) 
~\le~
\sum_{t=1}^T \surr_t^\eta(\w_t^\eta) 
+ \text{master regret compared to $\eta$-slave}
.
\end{equation}
Then each $\eta$-slave takes care of learning $\u$, with regret $O(d
\log T)$:
\begin{equation}\label{eq:slave.plan}
\sum_{t=1}^T \surr_t^\eta(\w_t^\eta) 
~\le~
\sum_{t=1}^T \surr_t^\eta(\u) 
+
\text{$\eta$-slave regret compared to $\u$}
.
\end{equation}
These two statements combine to
\begin{equation}\label{eq:overall.plan}
 \eta \sum_{t=1}^T (\w_t-\u)^\top \grad_t
- \eta^2 V_T^\u
~=~
- \sum_{t=1}^T \surr_t^\eta(\u) 
~\le~
\text{sum of regrets above}
\end{equation}
\end{subequations}
and the overall result follows by optimizing $\eta$.

\subsection{Master}

To show that we can aggregate the slave predictions, we consider the
potential $\Phi_T \df \sum_\eta \pi_1^\eta e^{- \alpha \sum_{t=1}^T
\surr_t^\eta(\w_t^\eta)}$. In Appendix~\ref{app:masterProof}, we bound
the last factor $e^{- \alpha \surr_T^\eta(\w_T^\eta)}$ above by its
tangent at $\w_T^\eta = \w_T$ and obtain an objective that can be shown
to be equal to $\Phi_{T-1}$ regardless of the gradient $\grad_T$ if
$\w_T$ is chosen according to the Master algorithm. It follows that the
potential is non-increasing:
\begin{lemma}[Master combines slaves]\label{lem:pot.is.small}
The Master Algorithm guarantees $1 = \Phi_0 \ge \Phi_1 \ge \ldots \ge
\Phi_T$.
\end{lemma}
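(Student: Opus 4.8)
The plan is to show that $\Phi_{t} \le \Phi_{t-1}$ for each $t$, which together with $\Phi_0 = \sum_\eta \pi_1^\eta = 1$ gives the chain of inequalities. Fix a round $t$ and abbreviate the cumulative surrogate loss of the $\eta$-slave up to round $t-1$ as $L_{t-1}^\eta \df \sum_{s=1}^{t-1}\surr_s^\eta(\w_s^\eta)$, so that $\Phi_{t-1} = \sum_\eta \pi_1^\eta e^{-\alpha L_{t-1}^\eta}$ and $\Phi_t = \sum_\eta \pi_1^\eta e^{-\alpha L_{t-1}^\eta} e^{-\alpha \surr_t^\eta(\w_t^\eta)}$. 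The idea is to upper bound the new factor $e^{-\alpha\surr_t^\eta(\w_t^\eta)}$ by something \emph{linear} in $\w_t^\eta$, evaluated around the master point $\w_t$, and then check that the master's choice of $\w_t$ is exactly what makes the resulting bound collapse back to $\Phi_{t-1}$.

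Concretely, first I would recall that $\surr_t^\eta$ as defined in \eqref{eq:surrogate} is a convex quadratic in $\u$, so $\u \mapsto e^{-\alpha\surr_t^\eta(\u)}$ is log-concave, hence concave after composing with... more carefully: the cleanest route is to bound $e^{-\alpha\surr_t^\eta(\w_t^\eta)}$ above by its first-order Taylor expansion (tangent) at the point $\w_t^\eta = \w_t$. Since $-\surr_t^\eta$ is concave in $\u$ only when the quadratic coefficient has the right sign, I instead use that $e^x \le e^{x_0}(1 + (x-x_0))$ is false in general, so the correct statement is: because $\surr_t^\eta(\w_t) = 0$ (directly from \eqref{eq:surrogate} with $\u = \w_t$), and because $e^{-\alpha s} \le 1 - \alpha s + (\text{something})$ — actually the right tool here is that $e^{-\alpha \surr_t^\eta(\u)} \le 1 - \alpha \nabla_{\u}\surr_t^\eta(\w_t)^\top(\u - \w_t)$ when the second-order and exponential-curvature terms combine favourably, which is exactly where the constraint $\eta \le \tfrac{1}{5DG}$ and the choice of $\alpha$ (namely $\alpha^{\mathrm{full}} = 1$, $\alpha^{\mathrm{diag}} = 1/d$) enter. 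Granting that linearization, $\Phi_t \le \sum_\eta \pi_1^\eta e^{-\alpha L_{t-1}^\eta}\bigl(1 - \alpha\, \grad_t'^\top(\w_t^\eta - \w_t)\bigr)$ for an appropriate vector $\grad_t'$ (a fixed affine function of $\grad_t$ and $\M_t$ that does not depend on $\eta$ except through an overall $\eta$ factor). Expanding, the term that could make $\Phi_t$ exceed $\Phi_{t-1}$ is $-\alpha\sum_\eta \pi_1^\eta e^{-\alpha L_{t-1}^\eta}\, \eta\,(\w_t^\eta - \w_t)^\top(\cdots)$, and this vanishes precisely when $\w_t = \bigl(\sum_\eta \pi_1^\eta e^{-\alpha L_{t-1}^\eta}\eta\,\w_t^\eta\bigr)\big/\bigl(\sum_\eta \pi_1^\eta e^{-\alpha L_{t-1}^\eta}\eta\bigr)$. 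By the exponential-weights update in line~\ref{line:expw}, $\pi_t^\eta \propto \pi_1^\eta e^{-\alpha L_{t-1}^\eta}$, so this is exactly the tilted weighted average $\w_t = \bigl(\sum_\eta \pi_t^\eta \eta \w_t^\eta\bigr)/\bigl(\sum_\eta \pi_t^\eta \eta\bigr)$ played by the Master in line~\ref{line:tilted.ewa}. Hence $\Phi_t \le \sum_\eta \pi_1^\eta e^{-\alpha L_{t-1}^\eta} = \Phi_{t-1}$, and the induction closes.

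The main obstacle is getting the linearization inequality $e^{-\alpha\surr_t^\eta(\u)} \le 1 - \alpha\,[\nabla\surr_t^\eta(\w_t)]^\top(\u-\w_t)$ to hold for \emph{all} $\u$ (in particular for $\u = \w_t^\eta$, which ranges over $\U$) and for every $\eta$ in the grid simultaneously. This is not a generic fact about convex quadratics; it requires controlling both the curvature of $\surr_t^\eta$ (which contributes a $+\eta^2(\u-\w_t)^\top\M_t(\u-\w_t)$ term that helps) against the convexity of the exponential, and the magnitude of the linear term $-\eta(\w_t-\u)^\top\grad_t$, whose size is bounded by $\eta D G \le \tfrac{1}{5}$ using \eqref{eq:B} and Cauchy–Schwarz. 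The factor $5$ in the grid constraint and the dimension-dependent $\alpha^{\mathrm{diag}} = 1/d$ are tuned exactly so that this pointwise inequality survives; verifying it is the one genuinely computational lemma, and it is the step I would expect the appendix (Appendix~\ref{app:masterProof}) to carry out in detail. Everything else — the telescoping, the identification of the minimizer, the translation to the algorithm's update — is bookkeeping.
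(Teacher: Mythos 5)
Your proposal follows exactly the paper's route: linearize $e^{-\alpha\surr_t^\eta(\u)}$ by its tangent at $\u=\w_t$ (using $\surr_t^\eta(\w_t)=0$, so the tangent is $1+\alpha\eta(\w_t-\u)^\top\grad_t$), observe that the master's tilted average is precisely the choice that annihilates the resulting $\eta$-weighted linear term, and conclude $\Phi_t\le\Phi_{t-1}$ by induction. You correctly flag the tangent inequality as the one piece requiring work; the paper closes it via the ``prod bound'' $e^{x-x^2}\le 1+x$ for $x\ge -2/3$: in the full case $\M_t=\grad_t\grad_t^\top$, so substituting $x=\eta(\w_t-\u)^\top\grad_t$ gives $x-x^2=-\surr_t^\eta(\u)$ exactly, and the constraint on $\eta$ ensures $|x|\le 2/3$; in the diagonal case one applies Jensen to split $e^{-\surr_t^\eta(\u)/d}$ into a $(1/d)$-average over coordinates and then uses the prod bound per coordinate, which is where $\alpha^{\mathrm{diag}}=1/d$ comes from. (One small correction: the factor $5$ in the grid cap is not needed for this lemma --- $\eta\le\tfrac{2}{3DG}$ already suffices for the prod bound; the tighter cap is required later for the Gaussian exp-concavity lemma in the slave analysis.)
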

As $0 \le - \frac{1}{\alpha} \ln \Phi_T \le \sum_{t=1}^T
\surr_t^\eta(\w_t^\eta) + \frac{-1}{\alpha} \ln \pi_1^\eta$, this
implements step \eqref{eq:master.plan} of our overall proof strategy,
with master regret $\frac{-1}{\alpha} \ln \pi_1^\eta$. We further remark
that we may view our potential function $\Phi_T$ as a
\emph{game-theoretic supermartingale} in the sense of
\citeauthornumber{supermartingales}, and this lemma as establishing that
the MetaGrad Master is the corresponding \emph{defensive forecasting}
strategy.

\subsection{Slaves}

Next we implement step \eqref{eq:slave.plan}, which requires proving an
$O(d \log T)$ regret bound in terms of the surrogate loss for each
MetaGrad slave. In the full case, the surrogate loss is jointly
exp-concave, and in light of the analysis of ONS by
\citeauthornumber{ons} such a result is not surprising. For the diagonal
case, the surrogate loss lacks joint exp-concavity, but we can use
exp-concavity in each direction separately, and verify that the
projections that tie the dimensions together do not cause any trouble.
In Appendix~\ref{appx:surrogateregret} we analyze both cases
simultaneously, and obtain the following bound on the regret:

\begin{lemma}[Surrogate regret bound]\label{lem:surrogateregret}
  For $0 < \eta \leq \frac{1}{5 D G}$, let $\surr_t^\eta(\u)$ be the
  surrogate losses as defined in~\eqref{eq:surrogate} (either the full
  or the diagonal version). Then the regret of
  Slave Algorithm~\ref{alg:MetaGradSlave} is bounded by
  \begin{equation*}
    \sum_{t=1}^T \surr_t^\eta(\w_t^\eta)
      \leq \sum_{t=1}^T \surr_t^\eta(\u)
      + \frac{1}{2 D^2} \norm{\u}^2
      + \frac{1}{2} \ln \det \del*{\I + 2 \eta^2 D^2 \sum_{t=1}^T \M_t}
  \end{equation*}
for all $\u \in \U$.
\end{lemma}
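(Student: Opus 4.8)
The plan is to recognise Slave Algorithm~\ref{alg:MetaGradSlave} as an instance of Online Newton Step run on the surrogate losses, exploiting the fact that $\surr_t^\eta$ is \emph{exactly} an affine-plus-PSD-quadratic function of $\u$, so that the ``second-order lower bound from exp-concavity'' that drives the ONS analysis holds here with equality; I would treat the full ($\M_t = \grad_t\grad_t^\top$) and diagonal ($\M_t$ diagonal) cases simultaneously. First I would Taylor-expand $\surr_t^\eta$ around the slave's own iterate $\w_t^\eta$ --- exact because $\surr_t^\eta$ is quadratic --- introducing $\nabla_t \df \nabla_\u\surr_t^\eta(\w_t^\eta) = \eta\grad_t + 2\eta^2\M_t(\w_t^\eta - \w_t)$, which is precisely the vector appearing in line~\ref{line:md}, so that $\surr_t^\eta(\u) - \surr_t^\eta(\w_t^\eta) = \nabla_t^\top(\u - \w_t^\eta) + \eta^2(\u - \w_t^\eta)^\top\M_t(\u - \w_t^\eta)$ for all $\u$. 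Writing $A_t \df (\Sigma_{t+1}^\eta)^{-1} = \tfrac1{D^2}\I + 2\eta^2\sum_{s=1}^t\M_s$, so that $A_0 = \tfrac1{D^2}\I$ and $A_t = A_{t-1} + 2\eta^2\M_t$, i.e.\ the preconditioner accumulates exactly the Hessians of the surrogates, Slave Algorithm~\ref{alg:MetaGradSlave} reads $\widetilde\w_{t+1}^\eta = \w_t^\eta - A_t^{-1}\nabla_t$ followed by the $A_t$-weighted projection onto $\U$. (This is the ``Gaussian posterior'' picture mentioned just after the algorithm: absent projections, this recursion equals the minimiser of $\tfrac1{2D^2}\norm{\u}^2 + \sum_{s\le t}\surr_s^\eta(\u)$, i.e.\ the mean of exponential weights with a $\mathcal{N}(\zeros, D^2\I)$ prior on the quadratic surrogates --- motivation for the update, not needed for the bound.)

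Next I would run the standard ONS telescoping. From $A_t(\widetilde\w_{t+1}^\eta - \u) = A_t(\w_t^\eta - \u) - \nabla_t$, expanding $(\widetilde\w_{t+1}^\eta - \u)^\top A_t(\widetilde\w_{t+1}^\eta - \u)$ and using the generalised Pythagorean inequality $(\w_{t+1}^\eta - \u)^\top A_t(\w_{t+1}^\eta - \u) \le (\widetilde\w_{t+1}^\eta - \u)^\top A_t(\widetilde\w_{t+1}^\eta - \u)$ for $\u\in\U$, together with $A_t = A_{t-1} + 2\eta^2\M_t$, gives $2\nabla_t^\top(\w_t^\eta - \u) \le b_t - b_{t+1} + 2\eta^2(\w_t^\eta - \u)^\top\M_t(\w_t^\eta - \u) + \nabla_t^\top A_t^{-1}\nabla_t$, where $b_t \df (\w_t^\eta - \u)^\top A_{t-1}(\w_t^\eta - \u)$. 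The Pythagorean inequality holds for projection onto the convex set $\U$ in the $A_t$-geometry regardless of the shape of $A_t$, so the coordinate coupling that the diagonal projection introduces is irrelevant to the analysis (it only matters for the implementation). Substituting the displayed inequality into $\surr_t^\eta(\w_t^\eta) - \surr_t^\eta(\u) = \nabla_t^\top(\w_t^\eta - \u) - \eta^2(\w_t^\eta - \u)^\top\M_t(\w_t^\eta - \u)$ from the exact expansion, the two curvature terms cancel exactly, leaving $\surr_t^\eta(\w_t^\eta) - \surr_t^\eta(\u) \le \tfrac12(b_t - b_{t+1}) + \tfrac12\nabla_t^\top A_t^{-1}\nabla_t$. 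Summing, telescoping, and dropping $b_{T+1}\ge 0$ gives $\sum_t\surr_t^\eta(\w_t^\eta) - \sum_t\surr_t^\eta(\u) \le \tfrac12 b_1 + \tfrac12\sum_t\nabla_t^\top A_t^{-1}\nabla_t$, and $b_1 = \u^\top A_0\u = \norm{\u}^2/D^2$ since $\w_1^\eta = \zeros$ --- that is the $\tfrac1{2D^2}\norm{\u}^2$ term in the Lemma.

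The one genuinely delicate step --- what I expect to be the main obstacle --- is bounding $\sum_t\nabla_t^\top A_t^{-1}\nabla_t$, and it is here that the grid restriction $\eta\le\tfrac1{5DG}$ is used. Because each slave takes Newton steps around its \emph{own} iterate $\w_t^\eta$ while $\surr_t^\eta$ is centred at the \emph{master}'s point $\w_t$, the effective gradient $\nabla_t$ is not $\eta\grad_t$ but a rescaling of it: in the full case $\nabla_t = c_t\,\eta\grad_t$ with the scalar $c_t = 1 + 2\eta\grad_t^\top(\w_t^\eta - \w_t)$, and coordinatewise $\nabla_{t,i} = c_{t,i}\,\eta\grads_{t,i}$ with $c_{t,i} = 1 + 2\eta\grads_{t,i}(w^\eta_{t,i} - w_{t,i})$ in the diagonal case. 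By Cauchy--Schwarz (full) or coordinatewise (diagonal) and the boundedness assumptions \eqref{eq:B}, $|c_t|,|c_{t,i}| \le 1 + 2\eta DG \le \tfrac75$, hence the squared rescaling is $\le \tfrac{49}{25} < 2$ --- which is exactly why the grid uses the constant $5$. Since $A_t - A_{t-1} = 2\eta^2\M_t$, a short computation then shows $\nabla_t^\top A_t^{-1}\nabla_t \le \Tr(A_t^{-1}(A_t - A_{t-1}))$, the factor below $2$ being precisely what makes this inequality go through. The elementary log-determinant lemma --- $\Tr(A_t^{-1}(A_t - A_{t-1})) \le \ln\det A_t - \ln\det A_{t-1}$ by concavity of $\ln\det$ --- telescopes to $\sum_t\nabla_t^\top A_t^{-1}\nabla_t \le \ln\det A_T - \ln\det A_0 = \ln\det(\I + 2\eta^2 D^2\sum_{t=1}^T\M_t)$, using $A_0 = \tfrac1{D^2}\I$. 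Inserting this into the bound of the previous paragraph completes the proof, uniformly over the two choices in \eqref{eq:M.choices}.
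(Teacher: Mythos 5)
Your proof is correct, and it takes a genuinely different route from the paper's. The paper establishes Lemma~\ref{lem:surrogateregret} via the exponential-weights picture: it identifies the slave state $(\w_t^\eta,\Sigma_t^\eta)$ with the mean and covariance of a Gaussian posterior obtained by mixing a $\normal(\zeros,D^2\I)$ prior with the surrogate losses, invokes the Pythagorean inequality for Kullback--Leibler divergence under projection onto $\P = \{P : \vmu_P \in \U\}$, relates $-\log \E_{P_t}[e^{-\surr_t^\eta(\u)}]$ to $\surr_t^\eta(\w_t^\eta)$ via a dedicated Gaussian exp-concavity lemma, computes $\E_Q[\surr_t^\eta]$ and $\KL(Q\|P_1)$ explicitly, and finishes by optimizing over the comparator covariance. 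You instead stay entirely in parameter space: you read the slave as ONS on the surrogate losses, use the exact quadratic expansion of $\surr_t^\eta$ around $\w_t^\eta$ (so the ``exp-concavity lower bound'' that usually drives ONS holds here with equality), run the standard ONS potential $(\w_t^\eta - \u)^\top A_{t-1}(\w_t^\eta - \u)$ with the Pythagorean inequality for the $A_t$-weighted projection onto $\U$, and close with the log-determinant telescope $\nabla_t^\top A_t^{-1}\nabla_t \le \Tr(A_t^{-1}(A_t - A_{t-1})) \le \ln\det A_t - \ln\det A_{t-1}$. The two proofs bite into the hypothesis $\eta \le \frac{1}{5DG}$ at morally the same point: the paper's Gaussian exp-concavity lemma reduces to $(1 + 2\eta D G)^2 \le 2$, which is exactly the bound you need on $c_t^2$ (or $c_{t,i}^2$) to absorb the rescaling of $\nabla_t$ relative to $\eta\grad_t$. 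Your route is more elementary, handles the full and diagonal cases with one argument (only the Cauchy--Schwarz vs.\ coordinatewise bound on $c_t$ differs), and makes the ONS connection flagged in Section~\ref{sec:metagrad} fully explicit; the paper's route \emph{derives} the update equations of Algorithm~\ref{alg:MetaGradSlave} from first principles (the Gaussian-posterior induction explains where the rank-one update and projection come from), and its intermediate variational bound \eqref{eq:variational.sigma}, with a free comparator covariance, is reused verbatim in the proof of Theorem~\ref{thm:mainbound} to extract $\eta$ from the $\ln\det$.
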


\subsection{Composition}
To complete the analysis of MetaGrad, we first put the regret bounds for the master and slaves together as in \eqref{eq:overall.plan}. We then discuss how to choose the grid of $\eta$s, and optimize $\eta$ over this grid to get our main result. Proofs are postponed to Appendix~\ref{appx:composition}.

\begin{theorem}[Grid point regret]\label{thm:untuned.regret}
The full and diagonal versions of MetaGrad, with corresponding
definitions from \eqref{eq:B} and \eqref{eq:M.choices}, guarantee that,
for any grid point $\eta$ with prior weight $\pi_1^\eta$,
\[
\Rtrick_T^\u
~\le~
\eta V_T^\u
+
\frac{
  \frac{1}{2 D^2} \norm{\u}^2
  - \frac{1}{\alpha} \ln \pi_1^\eta
  + \frac{1}{2} 
  \ln \det \del*{\I + 2 \eta^2 D^2 \sum_{t=1}^T \M_t}
}{
  \eta
}
\]
for all $\u \in \U$.
\end{theorem}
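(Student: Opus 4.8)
The plan is to chain the two lemmas already established through the surrogate loss, exploiting the defining feature of \eqref{eq:surrogate} that $\surr_t^\eta(\w_t) = 0$: substituting $\u = \w_t$ into \eqref{eq:surrogate} annihilates both the linear and the quadratic term. Concretely, Lemma~\ref{lem:pot.is.small} together with the remark following it gives, for every grid point $\eta$,
\[
0 ~=~ \sum_{t=1}^T \surr_t^\eta(\w_t) ~\le~ \sum_{t=1}^T \surr_t^\eta(\w_t^\eta) - \tfrac{1}{\alpha}\ln\pi_1^\eta ,
\]
which is exactly step \eqref{eq:master.plan}. Then Lemma~\ref{lem:surrogateregret} supplies step \eqref{eq:slave.plan}, bounding $\sum_t \surr_t^\eta(\w_t^\eta)$ above by $\sum_t \surr_t^\eta(\u) + \tfrac{1}{2D^2}\norm{\u}^2 + \tfrac12 \ln\det\del*{\I + 2\eta^2 D^2 \sum_{t=1}^T \M_t}$ for every $\u \in \U$.

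Combining the two displays, the surrogate sum $\sum_t \surr_t^\eta(\u)$ is the only remaining object to unpack. Expanding \eqref{eq:surrogate} at the fixed comparator $\u$ and summing over $t$ gives
\[
\sum_{t=1}^T \surr_t^\eta(\u) ~=~ -\eta \sum_{t=1}^T (\w_t - \u)^\top \grad_t ~+~ \eta^2 \sum_{t=1}^T (\u - \w_t)^\top \M_t (\u - \w_t) ~=~ -\eta \Rtrick_T^\u + \eta^2 V_T^\u ,
\]
where the last equality uses that $\sum_t (\u - \w_t)^\top \M_t (\u - \w_t)$ coincides with the variance quantity $V_T^\u$ for both choices of $\M_t$ in \eqref{eq:M.choices}: the rank-one choice $\Mfull_t = \grad_t \grad_t^\top$ produces the summand $\del*{(\u - \w_t)^\top \grad_t}^2$, and the diagonal choice produces $\sum_{i=1}^d (u_i - w_{t,i})^2 \grads_{t,i}^2$, matching the two definitions of $V_T^\u$ from the Setup. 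Substituting this identity into the chained inequality, rearranging to isolate $\eta \Rtrick_T^\u$, and dividing by $\eta > 0$ yields exactly the claimed bound; this is the ``optimizing $\eta$''-free precursor \eqref{eq:overall.plan}.

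I expect this composition step to be essentially bookkeeping; all the genuine work sits in Lemmas~\ref{lem:pot.is.small} and~\ref{lem:surrogateregret}. The only points demanding care are keeping $\alpha$ (which is $\alphafull = 1$ for the full version and $\alphadiag = 1/d$ for the diagonal version) consistent between the master regret term $-\tfrac{1}{\alpha}\ln\pi_1^\eta$ appearing here and the potential argument behind Lemma~\ref{lem:pot.is.small}, and verifying that the $\M_t$-induced quadratic really reduces to the advertised $V_T^\u$ in each of the two regimes — the identification flagged above, which is where the full and diagonal versions momentarily diverge before the remaining algebra proceeds identically.
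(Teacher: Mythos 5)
Your proposal is correct and matches the paper's proof essentially line for line: both chain Lemma~\ref{lem:pot.is.small} (via $\Phi_T \le 1$, i.e.\ the master-regret bound $-\tfrac{1}{\alpha}\ln\pi_1^\eta$) with Lemma~\ref{lem:surrogateregret}, expand $\sum_t \surr_t^\eta(\u) = -\eta\Rtrick_T^\u + \eta^2 V_T^\u$ using the identity $(\u-\w_t)^\top\M_t(\u-\w_t)$ yields $V_T^\u$ for both choices of $\M_t$, and divide by $\eta>0$. The only cosmetic difference is that the paper works directly with the inequality $0 \ge \tfrac{1}{\alpha}\ln\Phi_T \ge \tfrac{1}{\alpha}\ln\pi_1^\eta - \sum_t\surr_t^\eta(\w_t^\eta)$ rather than first invoking $\surr_t^\eta(\w_t)=0$, but this is the same content.
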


\paragraph{Grid}
We now specify the grid points and corresponding prior.
Theorem~\ref{thm:untuned.regret} above implies that any two $\eta$ that
are within a constant factor of each other will guarantee the same bound
up to essentially the same constant factor. We therefore choose an
exponentially spaced grid with a heavy tailed prior (see Appendix~\ref{appx:grid}):

\begin{equation}\label{eqn:grid}
\eta_i 
~\df~
\frac{2^{-i}}{5 D G}
\quad
\text{and}
\quad
\pi_1^{\eta_i} 
~\df~
\frac{C}{(i+1)(i+2)}
\quad
\text{for $i=0,1,2,\ldots,\ceil{\half \origlog_2 T}$,}
\end{equation}
with normalization $C = 1 + \wfrac{1}{(1 + \ceil{\half \origlog_2 T})}$. At the cost of a worse
constant factor in the bounds, the number of slaves can be reduced
by using a larger spacing factor, or by omitting some of the
smallest learning rates. The net effect of \eqref{eqn:grid} is that, for
any $\eta \in
[\frac{1}{5 D G \sqrt{T}},\frac{2}{5 D G}]$ there is an $\eta_i \in
[\half \eta, \eta]$, for which $- \ln \pi_1^{\eta_i} \leq 2\log(i+2) = O(\ln \ln (1/\eta_i)) = O(\ln \ln (1/\eta))$. 
As these costs are independent of $T$, our regret guarantees still hold if the grid~\eqref{eqn:grid} is instantiated with $T$ replaced by any upper bound.

The final step is to apply
Theorem~\ref{thm:untuned.regret} to this grid, and to
properly select the learning rate $\eta_i$ in the bound. This leads to our main result:
\begin{theorem}[MetaGrad Regret Bound]\label{thm:mainbound}
Let $\S_T = \sum_{t=1}^T \M_t$ and $V_{T,i}^\u = \sum_{t=1}^T (u_i -
w_{t,i})^2 \grads_{t,i}^2$. Then the regret of
MetaGrad, with corresponding definitions from \eqref{eq:B} and
\eqref{eq:M.choices} and with grid and prior as in \eqref{eqn:grid}, is
bounded by
\begin{equation*}
  \Rtrick_T^\u ~\le~ \sqrt{ 8 V_T^\u \del*{\frac{1}{D^2} \norm{\u}^2  + \Xi_T +
  \frac{1}{\alpha}C_T}} + 5
  D G \del*{\frac{1}{D^2} \norm{\u}^2 + \Xi_T + \frac{1}{\alpha}C_T}
\end{equation*}
for all $\u \in \U$,
where
\begin{align*}
\Xi_T &\le \min \set*{\ln \det \del*{\I + \frac{D^2 \rk(\S_T)}{V_T^\u} \S_T},
\rk(\S_T) \ln \del*{\frac{D^2}{V_T^\u} \sum_{t=1}^T \|\grad_t\|^2}}
\\
&= O(d \log(D^2 G^2 T))
\end{align*}
for the full version of the algorithm,
\begin{equation*}
  \Xi_T = \sum_{i=1}^d \log \del*{\frac{D^2 \sum_{t=1}^T
  \grads_{t,i}^2}{V_{T,i}^\u}} = O(d \log(D^2 G^2 T))
\end{equation*}
for the diagonal version, and $C_T = 4 \log\del*{3 + \half \origlog_2 T}
= O(\log \log T)$ in both cases. Moreover, for both versions of the
algorithm, the regret is simultaneously bounded by
\begin{equation*}
\Rtrick_T^\u
  \leq 
  \sqrt{
8 D^2 \del*{\sum_{t=1}^T \|\grad_t\|^2}
  \del*{
  \frac{1}{D^2} \norm{\u}^2
  + \frac{1}{\alpha} C_T
}}
 + 
5 D G
  \del*{\frac{1}{D^2} \norm{\u}^2
  + \frac{1}{\alpha} C_T}
\end{equation*}
for all $\u \in \U$.
\end{theorem}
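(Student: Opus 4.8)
The plan is to instantiate the grid-point regret bound of Theorem~\ref{thm:untuned.regret} at a learning rate $\eta$ from the grid~\eqref{eqn:grid} that approximately minimises its right-hand side, and then to read off the two stated inequalities by controlling the $\ln\det$ term in two different ways. \textbf{Reduction to one parameter.} Theorem~\ref{thm:untuned.regret} states, for every grid point $\eta$,
\[
\Rtrick_T^\u
~\le~
\eta V_T^\u + \frac{P(\eta)}{\eta},
\qquad
P(\eta) \df \frac{1}{2D^2}\norm{\u}^2 - \frac{1}{\alpha}\ln \pi_1^\eta + \frac12 \ln\det\del*{\I + 2\eta^2 D^2 \S_T} .
\]
Here $\eta \mapsto \ln\det(\I + 2\eta^2 D^2\S_T)$ is nondecreasing, and the grid property recorded after~\eqref{eqn:grid} gives $-\frac1\alpha\ln\pi_1^{\eta_i} \le \frac{2}{\alpha}\log(i+2) \le \frac{1}{\alpha} C_T$ with $C_T = 4\log(3+\half\origlog_2 T)$, since $i \le \ceil{\half\origlog_2 T}$; so $P$ grows only logarithmically in $1/\eta$ and $T$. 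Treating $P$ as essentially constant, the unconstrained minimiser of the right-hand side is $\eta^\star = \sqrt{P/V_T^\u}$, attaining $2\sqrt{V_T^\u P}$.

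\textbf{Picking a grid point.} Clamp $\eta^\star$ into $[\frac{1}{5DG\sqrt T}, \frac{2}{5DG}]$; by the grid property there is $\eta_i \in [\frac12\eta^\star, \eta^\star]$ (after clamping). Since $x\mapsto x V_T^\u + P/x$ is decreasing on $(0,\eta^\star]$ and $P$ is nondecreasing, evaluating at such an $\eta_i$ loses only a constant factor: $x V_T^\u + P/x \le \frac52\sqrt{V_T^\u P}$ whenever $x \in [\frac12\sqrt{P/V_T^\u}, \sqrt{P/V_T^\u}]$. The clamp can be active only at the top, i.e.\ $\eta^\star > \frac{2}{5DG}$, which happens precisely when $V_T^\u$ is tiny; then $\eta_i = \eta_0 = \frac{1}{5DG}$, the term $\eta_0 V_T^\u$ is dominated by $P(\eta_0)/\eta_0 = 5DG\,P(\eta_0)$, and this is where the additive $5DG(\cdot)$ contribution of the final bound comes from. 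The clamp is never active at the bottom, because $V_T^\u \le T(DG)^2$ (up to a factor $d$ in the diagonal case) while $P \ge -\frac1\alpha\ln\pi_1^{\eta_0}$ is bounded below by a positive constant, so $\eta^\star \ge \frac{1}{5DG\sqrt T}$.

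\textbf{The two estimates of the determinant.} For the first stated inequality we bound $P(\eta_i)$ by replacing the determinant with $\Xi_T$. Because $\eta_i \le \eta^\star$ is of order $1/\sqrt{V_T^\u}$, the matrix $2\eta_i^2 D^2 \S_T$ is dominated by $\frac{D^2 \rk(\S_T)}{V_T^\u}\S_T$, so monotonicity of $\ln\det$ gives the first alternative in $\Xi_T$; the second alternative follows from the elementary inequality bounding $\ln\det(\I+A)$ by $\rk(A)$ times the log of $1$ plus the trace of $A$ divided by $\rk(A)$, together with the fact that $\S_T$ has trace $\sum_t \norm{\grad_t}^2$ for either choice of $\M_t$ in~\eqref{eq:M.choices} (for the diagonal $\M_t$ the determinant is of a diagonal matrix and directly yields the displayed product). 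Hence $P(\eta_i) \le \frac{1}{2D^2}\norm{\u}^2 + \frac12\Xi_T + \frac1{2\alpha}C_T$, and substituting into the previous paragraph and collecting the numerical constants into the stated $\sqrt{8\,\cdot}$ and $5DG\cdot$ factors gives the first inequality. For the second stated inequality, which contains no $\Xi_T$, we instead use the cruder bound $\frac12\ln\det(\I + 2\eta^2 D^2\S_T) \le \eta^2 D^2 \sum_t \norm{\grad_t}^2$ (from $\ln\det(\I+A)\le$ trace of $A$); plugging this in and using $V_T^\u \le D^2\sum_t\norm{\grad_t}^2$ (Cauchy--Schwarz with~\eqref{eq:B} in the full case, termwise in the diagonal case), the trade-off becomes $2\eta D^2\sum_t\norm{\grad_t}^2 + \frac1\eta\del*{\frac1{2D^2}\norm{\u}^2 - \frac1\alpha\ln\pi_1^\eta}$, whose penalty is now $\eta$-free except for the prior, so the same clamping/rounding argument with $V_T^\u$ replaced by $2D^2\sum_t\norm{\grad_t}^2$ produces the claimed bound.

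\textbf{Main obstacle.} The conceptual steps are routine. The delicate part is the third one: choosing $\eta_i$ close enough to $1/\sqrt{V_T^\u}$ that $2\eta_i^2 D^2\S_T$ really does fit under $\frac{D^2\rk(\S_T)}{V_T^\u}\S_T$ (the factor $\rk(\S_T)$ in $\Xi_T$ is exactly what absorbs both this slack and the geometric-grid slack) yet large enough that the balanced value $\sqrt{V_T^\u P}$ is still attained; checking that in the tiny-$V_T^\u$ regime handled by $\eta_0$ the determinant term is still controlled by $\Xi_T$; and verifying that the two determinant estimates genuinely combine into the minimum defining $\Xi_T$ with precisely the constants $8$, $5DG$ and $C_T = 4\log(3+\half\origlog_2 T)$ asserted. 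This is bookkeeping, not new ideas.
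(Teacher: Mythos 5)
Your proof of the second (crude) bound is essentially correct and parallels the paper's: replacing $\frac12\ln\det(\I + 2\eta^2 D^2\S_T)$ by $\eta^2 D^2\tr(\S_T)$ and then pooling the $V_T^\u$-term via $V_T^\u \le D^2\tr(\S_T)$ is exactly what the paper does (it is the variational bound at $\Sigma = \I$). The grid-clamping and two-regime tuning in your second paragraph are also sound and match the paper.

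The gap is in your derivation of the first (sharp) bound, and it is not bookkeeping. You want to pick $\eta_i \approx \eta^\star = \sqrt{P/V_T^\u}$ and then argue that at this $\eta_i$ one has $\ln\det(\I + 2\eta_i^2 D^2\S_T) \le \Xi_T$, justified by the domination $2\eta_i^2 D^2 \S_T \preceq \frac{D^2\rk(\S_T)}{V_T^\u}\S_T$. But this domination requires $\eta_i^2 \le \frac{\rk(\S_T)}{2V_T^\u}$, whereas $\eta^\star$ is of order $\sqrt{\Xi_T/V_T^\u}$, and $\Xi_T$ typically exceeds $\rk(\S_T)$ by a factor $\ln(D^2 G^2 T)$. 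So for the very grid point you need, the matrix inequality fails and you cannot replace the determinant by $\Xi_T$; at best you get an inflated $\rk(\S_T)\ln\bigl(\Xi_T D^2\tr(\S_T)/(\rk(\S_T)V_T^\u)\bigr)$, which only closes under a fixed-point/bootstrap argument and does not reproduce the stated constants. This self-reference (the learning rate you want to tune to depends, through $\ln\det$, on the learning rate) is precisely what the paper's proof is designed to avoid.

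The paper resolves this with an extra idea you do not have: it applies the Gaussian variational identity already used in the slave proof,
\[
\tfrac12\ln\det\bigl(\I + 2\eta^2 D^2\S_T\bigr)
~=~
\tfrac12\inf_{\Sigma\succ\zeros}\Bigl\{-\ln\det\Sigma + \tr\bigl(\Sigma(\I + 2\eta^2 D^2\S_T)\bigr) - d\Bigr\},
\]
which splits the $\eta$-dependence out of the determinant: the piece $2\eta^2 D^2\tr(\Sigma\S_T)$, after dividing by $\eta$, becomes a second linear-in-$\eta$ term that is merged with $\eta V_T^\u$, while the remaining $-\ln\det\Sigma + \tr(\Sigma) - d$ is $\eta$-free. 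Choosing $\Sigma$ independently of $\eta$ (namely $\Sigma = (\I + c\S_T)^{-1}$ with $c = \rk(\S_T)(\frac{D^2}{V_T^\u} - \frac{1}{\tr\S_T})$ in the full case, and the analogous diagonal choice) ensures $D^2\tr(\Sigma\S_T)\le V_T^\u$, $\tr(\Sigma)\le d$, and defines $\Xi_T = -\ln\det\Sigma$, so that the bound collapses to $2\eta V_T^\u + A/\eta$ with $A$ independent of $\eta$, and only then is the grid point selected. Without this decoupling step your chain of inequalities does not go through.
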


These two bounds together show that the full version of MetaGrad
achieves the new adaptive guarantee of Theorem~\ref{thm:roughthm}. The
diagonal version behaves like running the full version separately per
dimension, but with a single shared learning rate.

\section{Discussion and Future Work}

One may consider extending MetaGrad in various directions. In particular
it would be interesting to speed up the method in high dimensions, for instance by sketching \cite{SON16}.
A broader question is to identify and be adaptive to more types of easy
functions that are of practical interest. One may suspect there to be a
price (in regret overhead and in computation) for broader adaptivity,
but based on our results for MetaGrad it does not seem like we are
already approaching the point where this price is no longer worth
paying.

\paragraph{Acknowledgments}
We would like to thank Haipeng Luo and the anonymous reviewers (in particular Reviewer 6)
for valuable comments.
Koolen acknowledges support by the Netherlands Organization for Scientific Research (NWO, Veni grant 639.021.439).

\DeclareRobustCommand{\VAN}[3]{#3} %
\bibliographystyle{abbrvnat}
{\small
\bibliography{../bib}\clearpage}

\appendix

\DeclareRobustCommand{\VAN}[3]{#2} %

\section{Extra Material Related to Section~\ref{sec:fastRateExamples}}
\label{app:MoreFastRateExamplesAndProofs}

In this section we gather extra material related to the fast rate
examples from Section~\ref{sec:fastRateExamples}. We first provide
simulations. Then we present the proofs of
Theorems~\ref{thm:curvedfunctions} and~\ref{thm:Bernstein}. And finally
we give an example in which the unregularized hinge loss satisfies the
Bernstein condition.

\subsection{Simulations: Logarithmic Regret without Curvature}
\label{app:simulations}

We provide two simple simulation examples to illustrate the sufficient
conditions from Theorems~\ref{thm:curvedfunctions} and
\ref{thm:Bernstein}, and to show that such fast rates are not
automatically obtained by previous methods for general functions. Both
our examples are one-dimensional (so the full and diagonal algorithms
coincide), and have a stable optimum (that good algorithms will converge
to); yet the functions are based on absolute values, which are neither
strongly convex nor smooth, so the gradient norms do not vanish near the
optimum. As our baseline we include AdaGrad \citep{adagrad}, because it
is commonly used in practice
\citep{WordRepresentations,NeuralNetworkReview} and because, in the
one-dimensional case, it implements GD with an adaptive tuning of the
learning rate that is applicable to general convex functions.

In the first example, we consider offline convex optimization of the
fixed function $f_t(u) \equiv f(u) = \abs{u-\frac{1}{4}}$, which
satisfies \eqref{eqn:curvedfunctions}, because it is convex. In the
second example, we look at stochastic optimization with convex functions
$f_t(u) = \abs{u - x_t}$, where the outcomes $x_t = \pm \half$ are
chosen i.i.d.\ with probabilities $0.4$ and $0.6$. These probabilities
satisfy \eqref{eqn:bernstein} with $\beta = 1$. Their values are by no
means essential, as long we avoid the worst case where the probabilities
are equal.

Figure~\ref{fig:killer} graphs the results. We see that in both cases
the regret of AdaGrad follows its $O(\sqrt{T})$ bound, while MetaGrad
achieves an $O(\ln T)$ rate, as predicted by
Theorems~\ref{thm:curvedfunctions} and~\ref{thm:Bernstein}. This shows
that MetaGrad achieves a type of adaptivity that is not achieved by
AdaGrad. We should be careful in extending this conclusion to higher
dimensions, though: whereas (the diagonal version of) AdaGrad uses a
separate learning rate per dimension, MetaGrad shares learning rates
between dimensions (unless we run a separate copy of MetaGrad per
dimension, as suggested in the related work section).

\begin{figure}[t]
\centering
\subfigure[{Offline: $f_t(u) = \abs{u-1/4}$} ]{
\includegraphics[width=.45\textwidth]{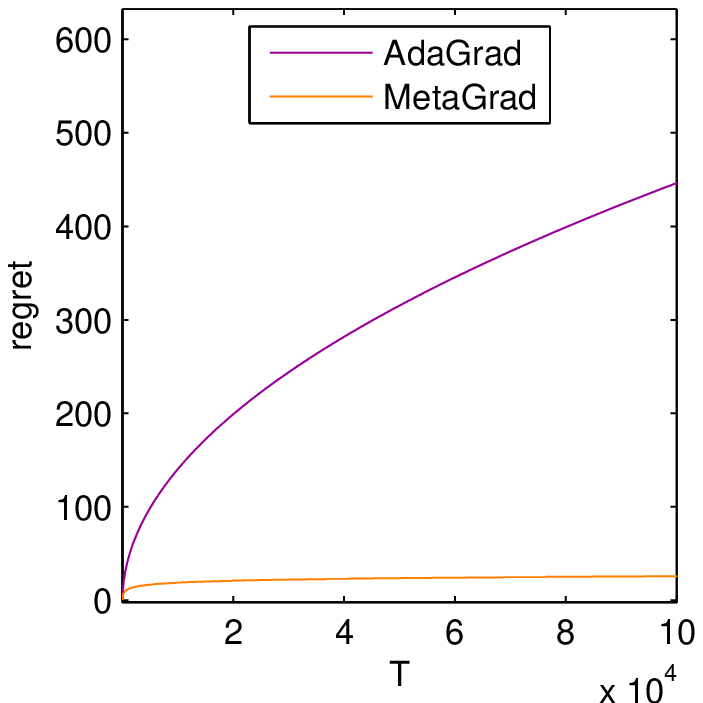}
}
\subfigure[{Stochastic Online: $f_t(u) = \abs{u- x_t}$ where $x_t = \pm
\half$ i.i.d.\ with probabilities $0.4$ and $0.6$.}]{
\includegraphics[width=.45\textwidth]{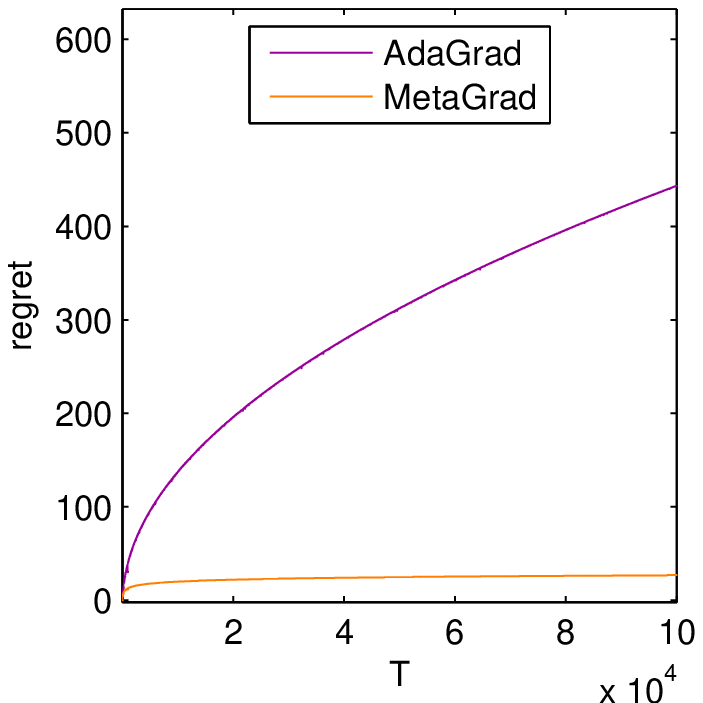}
}
\caption{Examples of fast rates on functions without curvature. MetaGrad
incurs logarithmic regret $O(\log T)$, while AdaGrad incurs
$O(\sqrt{T})$ regret, matching its bound.}
\label{fig:killer}
\end{figure}

\subsection{Proof of Theorem~\ref{thm:curvedfunctions}}

\begin{proof}
By \eqref{eqn:curvedfunctions}, applied with $\w = \w_t$, and
Theorem~\ref{thm:roughthm}, there exists a $C > 0$ (depending on $a$)
such that, for all sufficiently large $T$,
{\allowdisplaybreaks%
\begin{align*}
  R_T^\u
    &\leq a \Rtrick_T^\u - b V_T^\u
    \leq C\sqrt{ V_T^\u\, d \ln T } + C d \ln T - b V_T^\u\\
    &\leq \frac{\gamma}{2} C V_T^\u + \del*{\frac{1}{2\gamma}+1} C d \ln
    T - b V_T^\u
    \qquad \text{for all $\gamma > 0$,}
\end{align*}}
where the last inequality is based on $\sqrt{xy} = \min_{\gamma > 0}
\frac{\gamma}{2} x + \frac{y}{2\gamma}$ for all $x,y > 0$. The result
follows upon taking $\gamma = \frac{2b}{C}$.
\end{proof}

\subsection{Proof of Theorem~\ref{thm:Bernstein}}

\begin{proof}
Abbreviate $\rtrick_t^\u = (\w_t - \u)^\top \grad_t$. Then, by
\eqref{eqn:roughmainbound}, Jensen's inequality and the Bernstein
condition, there exists a constant $C > 0$ such that, for all
sufficiently large $T$, the expected linearized regret is at most
\begin{align*}
\E\sbr*{\Rtrick_T^{\u^*}} &\leq
C \E\sbr*{\sqrt{ V_T^{\u^*}\, d \ln T }} + C d \ln T
\leq C \sqrt{ \E\sbr*{V_T^{\u^*}}\, d \ln T} + C d \ln T\\
&\leq C \sqrt{ B \sum_{t=1}^T \del*{\E\sbr*{\rtrick_t^{\u^*}}}^\beta \, d \ln T} + C d \ln T.
\end{align*}
We will repeatedly use the fact that
\begin{equation}\label{eqn:linearize}
  x^\alpha y^{1-\alpha}
   = c_\alpha \inf_{\gamma > 0} \del*{\frac{x}{\gamma} +
   \gamma^\frac{\alpha}{1-\alpha}y}
  \qquad \text{for any $x,y \geq 0$ and $\alpha \in (0,1)$,}
\end{equation}
where $c_\alpha = (1-\alpha)^{1-\alpha} \alpha^\alpha$. Applying this
first with $\alpha = 1/2$, $x = B d \ln T$ and $y = \sum_{t=1}^T
\del*{\E\sbr{\rtrick_t^{\u^*}}}^\beta$, we obtain
\begin{equation*}
  \sqrt{ B \sum_{t=1}^T \del*{\E\sbr{\rtrick_t^{\u^*}}}^\beta \, d \ln T}
  \leq 
  c_{1/2}\gamma_1 \sum_{t=1}^T \del*{\E\sbr{\rtrick_t^{\u^*}}}^\beta +
  \frac{c_{1/2}}{\gamma_1} B d \ln T
  \quad \text{for any $\gamma_1 > 0$.}
\end{equation*}
If $\beta = 1$, then $\sum_{t=1}^T \del*{\E\sbr{\rtrick_t^{\u^*}}}^\beta
= \E\sbr{\Rtrick_T^{\u^*}}$ and the result follows by taking $\gamma_1 =
\frac{1}{2C c_{1/2}}$. Alternatively, if $\beta < 1$, then we apply
\eqref{eqn:linearize} a second time, with $\alpha = \beta$, $x =
\E\sbr{\rtrick_t^{\u^*}}$ and $y=1$, to find that, for any $\gamma_2>0$,
\begin{align*}
  \sqrt{ B \sum_{t=1}^T \del*{\E\sbr{\rtrick_t^{\u^*}}}^\beta \, d \ln T}
  &\leq 
  c_\beta c_{1/2} \gamma_1 \sum_{t=1}^T
  \del*{\frac{\E\sbr{\rtrick_t^{\u^*}}}{\gamma_2} +
  \gamma_2^{\beta/(1-\beta)}} +
  \frac{c_{1/2}}{\gamma_1} B d \ln T\\
  &= 
  \frac{c_\beta c_{1/2} \gamma_1}{\gamma_2} \E\sbr{\Rtrick_T^{\u^*}}
  + c_\beta c_{1/2} \gamma_1 \gamma_2^{\beta/(1-\beta)} T
  + \frac{c_{1/2}}{\gamma_1} B d \ln T.
\end{align*}
Taking $\gamma_1 = \frac{\gamma_2}{2 c_\beta c_{1/2} C}$, this yields
\begin{equation*}
  \E\sbr{\Rtrick_T^{\u^*}}
    \leq
  \gamma_2^{1/(1-\beta)} T
  + \frac{4 C^2 c_{1/2}^2 c_\beta B d \ln T}{\gamma_2} + 2 Cd\ln T.
\end{equation*}
We may optimize over $\gamma_2$ by a third application of
\eqref{eqn:linearize}, now with the choices $x = 4 C^2 c_{1/2}^2 c_\beta B d \ln T$,
$y = T$ and $\alpha = 1/(2-\beta)$, such that $\alpha/(1-\alpha) =
1/(1-\beta)$:
\begin{align*}
  \E\sbr{\Rtrick_T^{\u^*}}
    &\leq \frac{1}{c_{1/(2-\beta)}}
    \del*{4 C^2 c_{1/2}^2 c_\beta B d \ln T}^{1/(2-\beta)}
    T^{(1-\beta)/(2-\beta)}
    + 2 Cd\ln T\\
    &= O\del*{\del*{B d \ln T}^{1/(2-\beta)} T^{(1-\beta)/(2-\beta)}
    + d\ln T},
\end{align*}
which completes the proof.
\end{proof}

\subsection{Unregularized Hinge Loss Example}
\label{app:hingeLossExample}

As shown by \citeauthornumber{AndereNIPSpaper2016}, the Bernstein
condition is satisfied in the following classification task:
\begin{lemma}[Unregularized Hinge Loss Example]
  Consider i.i.d.\ labeled observations $(\X_1,Y_1),(\X_2,Y_2),\ldots$ with $Y_t$
  taking values in $\{-1,+1\}$, and let $f_t(\u) = \max\{0,1 - Y_t
  \ip{\u}{\X_t}\}$ be the \emph{hinge loss}. Assume that both $\U$ and
  the domain for $\X_t$ are the $d$-dimensional unit ball. Then the
  $(B,\beta)$-Bernstein condition is satisfied with $\beta = 1$ and $B =
  \frac{2\lambdamax}{\|\vmu\|}$, where $\lambdamax$ is the maximum
  eigenvalue of $\E\sbr*{\X \X^\top}$ and $\vmu = \E\sbr{Y\X}$, provided
  that $\|\vmu\| > 0$.

  In particular, if $\X_t$ is uniformly distributed on the sphere and the labels are 
  $Y_t = \sign(\ip{\bar{\u}}{\X_t})$, i.e.\ the noiseless classification of
  $\X_t$ according to the hyperplane with normal vector $\bar{\u}$, then $B
  \leq \frac{c}{\sqrt{d}}$ for some absolute constant $c > 0$.
\end{lemma}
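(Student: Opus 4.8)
The whole argument hinges on one observation, which I would establish first: since $\U$ and the range of the $\X_t$ both lie in the unit ball, Cauchy--Schwarz gives $Y_t\ip{\u}{\X_t}\le\norm{\u}\,\norm{\X_t}\le 1$ for every $\u\in\U$, so on $\U$ the hinge loss coincides with the affine map $\u\mapsto 1-Y_t\ip{\u}{\X_t}$ outside the probability-zero kink set, and $-Y_t\X_t$ is a valid subgradient of $f_t$ at \emph{every} point of $\U$. Using this subgradient and $Y^2=1$, I would compute, for every $\w\in\U$, that $\E[\nabla f(\w)]=-\vmu$ and $\E[\nabla f(\w)\nabla f(\w)^\top]=\E[\X\X^\top]$ (both independent of $\w$), and that $\E[f(\u)]=1-\ip{\u}{\vmu}$, so that over the unit ball the stochastic optimum is $\u^*=\vmu/\norm{\vmu}$ (well defined since $\norm{\vmu}>0$).

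Next I would substitute these into the Bernstein condition~\eqref{eqn:bernstein} with $\beta=1$. Its left-hand side equals $(\w-\u^*)^\top\E[\X\X^\top](\w-\u^*)\le\lambdamax\norm{\w-\u^*}^2$, and its right-hand side equals $B\,(\u^*-\w)^\top\vmu=B(\norm{\vmu}-\ip{\w}{\vmu})$. So the condition reduces to the elementary inequality $\norm{\w-\u^*}^2=\norm{\w}^2-\frac{2}{\norm{\vmu}}\ip{\w}{\vmu}+1\le\frac{2}{\norm{\vmu}}(\norm{\vmu}-\ip{\w}{\vmu})$, which holds because $\norm{\w}\le 1$ and $\norm{\u^*}=1$; multiplying through by $\lambdamax$ gives exactly the right-hand side with $B=2\lambdamax/\norm{\vmu}$. (Note $\norm{\vmu}-\ip{\w}{\vmu}\ge\norm{\vmu}(1-\norm{\w})\ge 0$, so the right-hand side is nonnegative and $B>0$, and the condition is well posed.)

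For the special case I would use rotational symmetry: for $\X$ uniform on the sphere, $\E[\X\X^\top]=\frac1d\I$ since its trace is $\E[\norm{\X}^2]=1$, hence $\lambdamax=1/d$; and, taking $\norm{\bar{\u}}=1$ without loss of generality, symmetry about the axis $\bar{\u}$ shows $\vmu=\norm{\vmu}\,\bar{\u}$ with $\norm{\vmu}=\E[\abs{\ip{\bar{\u}}{\X}}]$. The one genuinely nontrivial step is then the lower bound $\norm{\vmu}=\Omega(1/\sqrt{d})$, which I would obtain from the reverse-H\"older inequality $\E[Z^2]\le\E[\abs{Z}]^{1/2}\E[\abs{Z}^3]^{1/2}$ for $Z=\ip{\bar{\u}}{\X}$, combined with $\E[Z^2]=1/d$ and the standard sphere moment $\E[Z^4]=\frac{3}{d(d+2)}$ (so that $\E[\abs{Z}^3]\le(\E[Z^4])^{3/4}=O(d^{-3/2})$ by Jensen), giving $\E[\abs{Z}]\ge(\E[Z^2])^2/\E[\abs{Z}^3]=\Omega(d^{-1/2})$; alternatively one may use the closed form $\E[\abs{Z}]=\Gamma(d/2)/(\sqrt{\pi}\,\Gamma((d+1)/2))$ with a Gamma-ratio bound, or the fact that $\sqrt{d}\,Z$ is asymptotically standard normal. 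Combining, $B=2\lambdamax/\norm{\vmu}=(2/d)/\Omega(d^{-1/2})=O(d^{-1/2})$, i.e.\ $B\le c/\sqrt{d}$. The main obstacle is therefore exactly this $\Theta(1/\sqrt{d})$-type estimate on $\norm{\vmu}$; the non-differentiability of the hinge loss, by contrast, causes no difficulty, because $-Y\X$ is a subgradient at every point of $\U$, so every gradient-based quantity above is unambiguous.
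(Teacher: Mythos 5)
Your proof is correct and, as far as this paper is concerned, more than the paper itself offers: the lemma is stated with a citation to Koolen et al.\ (2016) and no argument is given in this manuscript, so there is nothing to compare against directly. Your derivation is clean and self-contained. The key observation that on the unit ball $Y_t\ip{\u}{\X_t}\le 1$, so the hinge loss is exactly the affine function $1-Y_t\ip{\u}{\X_t}$ and $-Y_t\X_t$ is a subgradient everywhere (indeed the gradient off the measure-zero kink), immediately gives $\E[\nabla f(\w)]=-\vmu$ and $\E[\nabla f(\w)\nabla f(\w)^\top]=\E[\X\X^\top]$ independent of $\w$, and the reduction to $\norm{\w}^2+1\le 2$ is exactly right. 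One small sharpening: since $1-Y_t\ip{\u}{\X_t}\ge 0$ for \emph{all} $\u$ in the unit ball, you do not even need the ``probability-zero kink set'' caveat --- the hinge loss is literally equal to that affine map on all of $\U$, with a genuine kink only at the single boundary point where $\u$ and $Y_t\X_t$ are aligned with both of unit norm, and there $-Y_t\X_t$ remains a valid element of the subdifferential. The spherical computation is also correct: $\E[\X\X^\top]=\frac1d\I$ so $\lambdamax=1/d$, and the lower bound $\norm{\vmu}=\E\abs{\ip{\bar{\u}}{\X}}\ge(\E[Z^2])^2/\E[\abs{Z}^3]\ge 3^{-3/4}d^{-1/2}$ via Cauchy--Schwarz and Jensen with the standard fourth moment $\E[Z^4]=3/(d(d+2))$ gives $B\le 2\cdot 3^{3/4}/\sqrt{d}$, which is the claimed $O(1/\sqrt d)$ with an explicit constant.
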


Thus the version of the Bernstein condition that implies an $O(d \log T)$
rate is always satisfied for the hinge loss on the unit ball, except
when $\|\vmu\| = 0$, which is very natural to exclude, because it
implies that the expected hinge loss is $1$ (its maximal value) for all
$\u$, so there is nothing to learn. It is common to add
$\ell_2$-regularization to the hinge loss to make it strongly convex,
but this example shows that that is not necessary to get logarithmic
regret.

\section{Master Regret Bound (Proof of Lemma~\ref{lem:pot.is.small})}
\label{app:masterProof}

\begin{proof}
To prove Lemma~\ref{lem:pot.is.small}, we start by bounding $e^{- \alpha
\surr_t^\eta(\w_t^\eta)}$ by its tangent at $\w_t^\eta = \w_t$:
\begin{equation}\label{eqn:tangentbound}
e^{- \alpha \surr_t^\eta(\w_t^\eta)}
~\le~
1 + \alpha \eta \del*{\w_t - \w_t^\eta}^\top \grad_t
\qquad \text{for any $\eta \in (0,\tfrac{2}{3 D G}]$.}
\end{equation}
For the full case, where $\alpha = \alphafull = 1$, this follows
directly from the ``prod bound'' $e^{x - x^2} \leq 1+x$ with $x = \eta
\del*{\w_t - \w_t^\eta}^\top \grad_t$, which has previously been used in
the prediction with expert advice setting
\citep{CesaBianchiMansourStoltz2007,GaillardStoltzVanErven2014,squint}
and holds for any $x \geq -2/3$. In the diagonal case,
\eqref{eqn:tangentbound} does not hold with $\alpha = 1$, but it can be
proved with $\alpha = \alphadiag = 1/d$ by an application of Jensen's
inequality combined with a separate prod bound per dimension:
\begin{align*}
e^{- \alpha \surr_t^\eta(\w_t^\eta)}
&~=~
e^{
  \sum_i
  \frac{1}{d}
  \del*{
    \eta (w_{t,i} - w_{t,i}^\eta) \grads_{t,i}
    - \eta^2  (w_{t,i} - w_{t,i}^\eta)^2 \grads_{t,i}^2
  }
}
\\
&~\stackrel{\text{\tiny Jensen}}{\le}~
\sum_i
\frac{1}{d}
e^{
  \del*{
  \eta (w_{t,i} - w_{t,i}^\eta) \grads_{t,i}
    - \eta^2  (w_{t,i} - w_{t,i}^\eta)^2 \grads_{t,i}^2
  }
}
\\
&~\stackrel{\text{\tiny prod bound}}{\le}~
\sum_i
\frac{1}{d}
\del*{
1 + \eta (w_{t,i}- w^\eta_{t,i}) \grads_{t,i}
}
~=~
1 + \alpha \eta (\w_t - \w^\eta_t)^\top \grad_t
.
\end{align*}

We proceed to show that the potential $\Phi_T$ is non-increasing:
\begin{align*}
\Phi_{T+1} - \Phi_T
&~=~
\sum_\eta \pi_1^\eta e^{- \alpha\sum_{t=1}^T \surr_t^\eta(\w_t^\eta)} \del*{
  e^{- \alpha \surr_t^\eta(\w_{T+1}^\eta)}
  - 1}
\\
&~\le~
\sum_\eta \pi_1^\eta e^{- \alpha \sum_{t=1}^T \surr_t^\eta(\w_t^\eta)}
  \alpha \eta \del*{\w_{T+1} - \w_{T+1}^\eta}^\top \grad_{T+1}
~=~ 0,
\end{align*}
where the inequality is the tangent bound \eqref{eqn:tangentbound}, and
the final equality is by definition of the master prediction (in fact,
it can be taken as the motivation for the master's definition)
\[
\w_{T+1}
~=~
\frac{
  \sum_\eta \pi_{T+1}^\eta \eta \w_{T+1}^\eta
}{
  \sum_\eta \pi_{T+1}^\eta \eta \phantom{\w_{T+1}^\eta}
}
~=~
\frac{
  \sum_\eta \pi_1^\eta e^{- \alpha \sum_{t=1}^T \surr_t^\eta(\w_t^\eta)}
  \eta \w_{T+1}^\eta
}{
  \sum_\eta \pi_1^\eta e^{- \alpha \sum_{t=1}^T \surr_t^\eta(\w_t^\eta)}
  \eta
}
.
\]
Since $\Phi_0 = 1$ is trivial, this completes the proof of the lemma.
\end{proof}

\section{Slave Regret Bound (Proof of Lemma~\ref{lem:surrogateregret})}\label{appx:surrogateregret}
\begin{proof}
For any distributions $P$ and $Q$ on $\reals^d$, let $\KL(P\|Q) =
\E_P\sbr{\log \frac{\der P}{\der Q}}$ denote the Kullback-Leibler
divergence of $P$ from $Q$, and let $\vmu_P = \E_P[\u]$ denote the mean
of $P$. In addition, let $\normal(\vmu,\Sigma)$ denote a normal
distribution with mean $\vmu$ and covariance matrix $\Sigma$.
  
  In round $t$, we play according to the mean of a
  multivariate Gaussian distribution $P_t$. In the first round, this is
  a normal distribution, which plays the role of a prior:
  \begin{equation*}
    P_1 = \normal(\zeros,D^2 \I).
  \end{equation*}
  Then we update using the exponential weights update, followed by a
  projection onto $\P = \{P : \vmu_P \in \U\}$, such that the mean stays
  in the allowed domain $\U$:
  \begin{align*}
    \der \tilde{P}_{t+1}(\u) &= \frac{e^{-\surr_t^\eta(\u)}\der
    P_t(\u)}{\int_{\reals^d} e^{-\surr_t^\eta(\u')}\der P_t(\u')},
      &
    P_{t+1} &= \argmin_{P \in \P}  \KL(P\|\tilde{P}_t).
  \end{align*}
  To see that Algorithm~\ref{alg:MetaGradSlave} implements this
  algorithm, we prove by induction that
  \begin{equation*}
    P_t = \normal\del{\w_t^\eta, \Sigma_t^\eta}.
  \end{equation*}
  For $t=1$ this is clear, and if it holds for any $t$ then it can be
  verified by comparing densities that $\tilde{P}_{t+1} =
  \normal\del{\widetilde{\w}_{t+1}^\eta, \Sigma^\eta_{t+1}}$.
 Since it is well-known that the
  KL-projection of a Gaussian $\normal\del{\vmu,\Sigma}$ onto $\P$ is
  another Gaussian $\normal\del{\vnu,\Sigma}$ with the same covariance
  matrix and mean $\vnu \in \U$ that minimizes $\frac{1}{2} (\vnu -
  \vmu)^\top \Sigma^{-1} (\vnu - \vmu)$, it then follows that $P_{t+1} =
  \normal\del{\w_{t+1}^\eta, \Sigma_{t+1}^\eta}$. For completeness we
  provide a proof of this last result in Lemma~\ref{lem:projectGaussian}
  of Appendix~\ref{gausproj}.

  It now remains to bound the regret. Since $\P$ is convex, the
  Pythagorean inequality for Kullback-Leibler divergence implies that
  \begin{equation*}
    \KL\delcc{Q}{\tilde{P}_{t+1}}
    \geq \KL\delcc{Q}{P_{t+1}} + \KL\delcc{P_{t+1}}{\tilde{P}_{t+1}}
    \geq \KL\delcc{Q}{P_{t+1}}
  \end{equation*}
  for all $Q \in \P$. The following telescoping sum therefore gives us
  that
  \begin{align}
    \KL\delcc{Q}{P_1}
      &\geq \sum_{t=1}^T \KL\delcc{Q}{P_t} - \KL\delcc{Q}{P_{t+1}}
      \geq \sum_{t=1}^T \KL\delcc{Q}{P_t} -
      \KL\delcc{Q}{\tilde{P}_{t+1}}\notag \\
      &= \sum_{t=1}^T -\log \E_{P_t}\sbr{e^{-\surr_t^\eta(\u)}}
      - \E_Q\sbr{\surr_t^\eta(\u)}.\label{eqn:Bregmantelescope}
  \end{align}
  This may be interpreted as a regret bound in the space of
  distributions, which we will now relate to our regret of interest. If
  $\M_t = \Mfull_t$, then Lemma~\ref{lem:gauss.exp.concave} in
  Appendix~\ref{app:gauss.exp.concave} implies that
  \begin{equation*}
    -\log \E_{P_t}\sbr{e^{-\surr_t^\eta(\u)}}
    \geq \surr_t^\eta(\w_t^\eta)
  \end{equation*}
  because $\w_t^\eta$ is the mean of $P_t$. Alternatively, if $\M_t =
  \Mdiag_t$ then $P_t$ has diagonal covariance $\Sigma_t^\eta$, and we
  can use Lemma~\ref{lem:gauss.exp.concave} again to draw the same conclusion.
  
  To control $\E_Q\sbr{\surr_t^\eta(\u)}$, we may restrict attention (without loss of generality by a standard maximum entropy argument) to
  normal distributions $Q = \normal\del{\vmu, D^2 \Sigma}$ with mean $\vmu
  \in \U$ and covariance $\Sigma \succ \zeros$ (expressed relative to the prior variance $D^2$). Then, using the
  cyclic property and linearity of the trace,
  \begin{align*}
    \E_Q\sbr{\surr_t^\eta(\u)} 
      &= - \eta(\w_t-\vmu)^\top \grad_t + \eta^2
      \del{\w_t^\top \M_t \w_t - 2\vmu^\top \M_t \w_t +
      \E_Q\sbr{\Tr\del{\u^\top \M_t
      \u}}}\\
      &= - \eta(\w_t-\vmu)^\top \grad_t + \eta^2
      \del{\w_t^\top \M_t \w_t - 2\vmu^\top \M_t \w_t + \Tr\del{\E_Q\sbr{\u
      \u^\top}\M_t}}\\
      &= - \eta(\w_t-\vmu)^\top \grad_t + \eta^2
      \del{\w_t^\top \M_t \w_t - 2\vmu^\top \M_t \w_t +
      \Tr\del{\del{D^2 \Sigma + \vmu \vmu^\top}\M_t}}
    \\
      &= - \eta(\w_t-\vmu)^\top \grad_t + \eta^2
      \del*{
        (\vmu - \w_t)^\top \M_t (\vmu - \w_t)
        +  D^2 \Tr\del{ \Sigma \M_t}
        }
    \\
      &= \surr_t^\eta(\vmu)  + \eta^2 D^2
         \Tr\del{ \Sigma \M_t}.
  \end{align*}
Finally, it remains to work out
\[
\KL\delcc{Q}{P_1}
~=~
\frac{1}{2 D^2} \norm{\vmu}^2
+
\frac{1}{2} \del*{
-  \ln \det \Sigma
  + \tr(\Sigma) - d
}
.
\]
We have now bounded all the pieces in \eqref{eqn:Bregmantelescope}.
Putting them all together with the choice $\vmu = \u$ and optimizing the
bound in $\Sigma$ gives:
\begin{align}
\notag
&
  \sum_{t=1}^T \surr_t^\eta(\w_t^\eta)
  - \sum_{t=1}^T \surr_t^\eta(\u)
\\
\notag
&~\le~
\frac{1}{2 D^2} \norm{\u}^2
+
\frac{1}{2}
\inf_{\Sigma \succ \zeros}~ \set*{
  -  \ln \det \Sigma
  + \tr \del*{ \Sigma \del*{\I + 2 \eta^2 D^2 \sum_{t=1}^T \M_t}} - d
}
\\
\label{eq:variational.sigma}
&~=~
\frac{1}{2 D^2} \norm{\u}^2
+ \frac{1}{2} \ln \det \del*{\I + 2 \eta^2 D^2 \sum_{t=1}^T \M_t}
,
\end{align}
where the minimum is attained at $\Sigma
=
\del*{
  \I
  + 2 \eta^2 D^2 \sum_{t=1}^T \M_t 
}^{-1}$.
\end{proof}

\section{Composition Proofs}\label{appx:composition}
Throughout this section we abbreviate $\S_T = \sum_{t=1}^T \M_t$. 

\subsection{Proof of Theorem~\ref{thm:untuned.regret}}
\begin{proof}
We start with
\begin{multline*}
0
~\stackrel{\text{\tiny Lemma~\ref{lem:pot.is.small}}}\ge~
\frac{1}{\alpha} \ln \Phi_T
~\ge~
\frac{1}{\alpha} \ln \pi_1^\eta
- \sum_{t=1}^T \surr_t^\eta(\w_t^\eta)
\\
~\stackrel{\text{\tiny Lemma~\ref{lem:surrogateregret}}}\ge~
\frac{1}{\alpha} \ln \pi_1^\eta
- \sum_{t=1}^T \surr_t^\eta(\u)
- \frac{1}{2 D^2} \norm{\u}^2
- \frac{1}{2} \ln \det \del*{\I + 2 \eta^2 D^2 \S_T}.
\end{multline*}
Now expanding the definition \eqref{eq:surrogate} of the surrogate losses we find
\[
\eta \sum_{t=1}^T (\w_t-\u)^\top \grad_t
~\le~
\frac{1}{2 D^2} \norm{\u}^2
- \frac{1}{\alpha} \ln \pi_1^\eta
+ \eta^2  V_T^\u
+ \frac{1}{2} \ln \det \del*{\I + 2 \eta^2 D^2 \S_T},
\]
in which we may divide by $\eta$ to obtain the claim.
\end{proof}

\subsection{Proof of Theorem~\ref{thm:mainbound}}

\begin{proof}
In principle we would like to directly select the $\eta$ that
optimizes the regret bound from Theorem~\ref{thm:untuned.regret}. But
unfortunately we cannot tractably minimize that bound, since $\eta$ occurs in the $\ln \det$. To bring the $\eta$ out, we apply the variational form from \eqref{eq:variational.sigma} to Theorem~\ref{thm:untuned.regret} to obtain
\begin{equation}\label{eqn:firstRbound}
\Rtrick_T^\u
\le \inf_{\Sigma \succ \zeros}~ 
\eta_i \del*{V_T^\u
  + D^2 \tr \del*{ \Sigma \S_T}
}
+
\frac{
  \frac{1}{D^2} \norm{\u}^2
  - \frac{2}{\alpha} \ln \pi_1^{\eta_i}
  - \ln \det(\Sigma)
  + \tr \del*{\Sigma}
  - d
}{
  2\eta_i
}
\end{equation}
for all grid points $\eta_i$. This leads to an upper bound by plugging
in a near-optimal choice for $\Sigma$, which we choose as
\begin{align*}
   & \text{full} & & \text{diag}\\
  \Sigma = &\del*{\I + c\S_T}^{-1}\kern-1em
  &\Sigma = \frac{1}{D^2} \diag(&V_{T,1}^\u, \ldots, V_{T,d}^\u) \S_T^{-1},
\end{align*}
where $c \df \rk(\S_T) \del*{\frac{D^2}{V_T^\u} - \frac{1}{\tr(\S_T)}}$
is non-negative because by Cauchy-Schwarz $V_T^\u \le D^2 \sum_{t=1}^T \|\grad_t\|^2 = D^2
\tr(\S_T)$. We proceed to bound terms involving
$\Sigma$ above. In the diagonal case, we use that $D^2 \tr \del*{ \Sigma
\S_T} = V_T^\u$ and $\Sigma \prec \I$ because $V_{T,i}^\u \leq
D^2 \sum_{t=1}^T \grads_{t,i}^2$ by Cauchy-Schwarz. In the full case, we
also have $\Sigma \prec \I$. In addition,
we observe that $\S_T$ and $\Sigma$ share the same eigenbasis, so we may
work in that basis. As $\Sigma \S_T$ has $\rk(\S_T)$ non-zero
eigenvalues, we may pull out a factor $\rk(\S_T)$ and replace the trace
by a uniform average of the eigenvalues. Then Jensen's inequality for
the concave function $x \mapsto \frac{x}{1 + c x}$ for $x \ge 0$ gives
\begin{align*}
D^2 \tr(\Sigma \S_T) 
&~\stackrel{\text{\tiny Jensen}}{\le}~
\frac{
  D^2 \tr(\S_T)
}{
  \del*{1 + \frac{c}{\rk(\S_T)} \tr(\S_T)}
}
~=~
V_T^\u.
\end{align*}
Thus, in both cases we have $D^2 \tr(\Sigma \S_T) \leq V_T^\u$ and
 $\Sigma \prec \I$, which implies that $\tr(\Sigma) ~\le~ \tr(\I)
= d$ and that
\begin{equation*}
  \Xi_T \df - \ln \det \Sigma \geq 0.
\end{equation*}
Finally, by construction of the grid, for any $\eta \in [\frac{1}{5 D G
\sqrt{T}},\frac{2}{5 D G}]$ there exists a grid point $\eta_i \in
[\frac{\eta}{2},\eta]$, and the prior costs of this grid point satisfy
\begin{equation*}
  -\log \pi_1^{\eta_i}
    \leq 2\log(2+i)
    \leq 2\log\del*{3 + \half \origlog_2 T}.
\end{equation*}

Plugging these bounds into \eqref{eqn:firstRbound} and abbreviating
\begin{align*}
A &\df \frac{1}{D^2} \norm{\u}^2 + \frac{4}{\alpha} \log\del*{3 +
\half \origlog_2 T} + \Xi_T \geq 4 \log 3,
\end{align*}
we obtain
\begin{equation*}
\Rtrick_T^\u
~\le~ 
2 \eta V_T^\u
+
\frac{A}{
  \eta
}.
\end{equation*}
Subsequently tuning $\eta$ optimally as
\[
\hat\eta
~=~
\sqrt{
  \frac{A
  }{
    2 V_T^\u
  }
} \geq \frac{\sqrt{2 \log
3}}{D G \sqrt{T}}
  \geq \frac{1}{5 D G \sqrt{T}}
\]
is allowed when $\hat\eta \leq \frac{2}{5 D G}$, and gives $\Rtrick_T^\u ~\le~
\sqrt{8 V_T^\u A }$. Alternatively, if $\hat\eta \geq
\frac{2}{5 D G}$, then we plug in $\eta = \frac{2}{5 D G}$ and obtain
$\Rtrick_T^\u
\le
\frac{4}{5 D G} V_T^\u
+
\frac{5}{2} D G A
~\le~ 
5 D G A
$, where the second inequality follows from the constraint on $\hat
\eta$.
In both cases, we find that
\begin{equation*}
  \Rtrick_T^\u ~\le~ \sqrt{8 V_T^\u A} + 5 D G A,
\end{equation*}
which results in the first claim of the theorem upon observing that, for
the full version of the algorithm, $\Xi_T \le \rk(\S_T) \ln
\del*{\frac{D^2 \tr(\S_T)}{V_T^\u} }$ by Jensen's inequality and $\Xi_T
\le \ln \det \del*{\I + \frac{D^2 \rk(\S_T)}{V_T^\u} \S_T}$ by
monotonicity of $\ln \det$.

To prove the second claim, we instead take the comparator covariance
$\Sigma = \I$ equal to the prior covariance and again use $V_T^\u \leq
D^2 \tr(\S_T)$ to find
\begin{align*}
\Rtrick_T^\u
&~\le~ 
\eta_i \del*{V_T^\u
  + D^2 \tr \del*{ \S_T}
}
+
\frac{
  \frac{1}{D^2} \norm{\u}^2
  - \frac{2}{\alpha} \ln \pi_1^{\eta_i}
}{
  2\eta_i
}
\\
&~~\le~
2 \eta_i D^2 \tr \del*{\S_T}
+
\frac{
  \frac{1}{D^2} \norm{\u}^2
  - \frac{2}{\alpha} \ln \pi_1^{\eta_i}
}{
  2\eta_i
}\\
&~\le~
2 \eta D^2 \tr \del*{\S_T}
+
\frac{
  \frac{1}{D^2} \norm{\u}^2
  + \frac{4}{\alpha} \log\del*{3 + \half \origlog_2 T}
}{
  \eta
}
\end{align*}
for all $\eta \in [\frac{1}{5 D G \sqrt{T}},\frac{2}{5 D G}]$. Tuning 
$\eta$ as
\begin{equation*}
  \hat \eta = \sqrt{
  \frac{
    \frac{1}{D^2} \norm{\u}^2
    + \frac{4}{\alpha} \log\del*{3 + \half \origlog_2 T}
  }{
    2 D^2 \tr \del*{\S_T}
  }
  }
  \geq
  \sqrt{
  \frac{
    4 \log 3
  }{
    2 D^2 G^2 T
  }
  }
  \geq
  \frac{1}{5 D G \sqrt{T}}
\end{equation*}
is allowed when $\hat \eta \leq \frac{2}{5 D G}$, and gives 
\begin{equation*}
\Rtrick_T^\u \leq 
  \sqrt{
8 D^2 \tr \del*{\S_T}
  \del*{
  \frac{1}{D^2} \norm{\u}^2
  + \frac{4}{\alpha} \log\del*{3 + \half \origlog_2 T}
}}.
\end{equation*}
Alternatively, if $\hat\eta \geq \frac{2}{5 D G}$, then we plug in $\eta
= \frac{2}{5 D G}$ and obtain
\begin{align*}
\Rtrick_T^\u
  &~\le~
    \frac{4}{5 D G} D^2 \tr \del*{\S_T}
    +
    \frac{5}{2} D G
      \del*{\frac{1}{D^2} \norm{\u}^2
      + \frac{4}{\alpha} \log\del*{3 + \half \origlog_2 T}}\\
  &~\le~
    5 D G
      \del*{\frac{1}{D^2} \norm{\u}^2
      + \frac{4}{\alpha} \log\del*{3 + \half \origlog_2 T}},
\end{align*}
where the second inequality follows from the constraint on $\hat \eta$.
In both cases, the second claim of the theorem follows.
\end{proof}

\section{Discussion of the Choice of Grid Points and Prior Weights}\label{appx:grid}

We now think about the choice of the grid and corresponding prior.
Theorem~\ref{thm:untuned.regret} above implies that any two $\eta$ that
are within a constant factor of each other will guarantee the same bound
up to a constant factor. Since $\eta$ is a continuous parameter, this suggests choosing a prior that is approximately uniform for $\log \eta$, which means it should have a
density that looks like $1/\eta$. Although
Theorem~\ref{thm:untuned.regret} does not show it, there is never any
harm in taking too many grid points, because grid points that are very
close together will behave as a single point with combined prior mass.
If we disregard computation, we would therefore like to use the prior discussed by
\citeauthornumber{ChernovVovk2010}, which is very close to uniform on $\log \eta$
and has density
\[
\pi(\eta) ~=~ \frac{C}{\eta \origlog_2^2(\frac{5}{2} D G \eta)},
\]
where we include the factor $\frac{5}{2} D G$ to make the
prior invariant under rescalings of the problem, and $C$ is a
normalizing constant that makes the prior integrate to $1$. To adapt this prior to a discrete grid, we need to integrate this density between grid points and assign prior masses:
\begin{equation*}
\pi_1^{\eta_i} 
~\df~
\int_{\eta_{i+1}}^{\eta_{i}} 
\pi(\eta) \dif \eta 
~=~
\left.
\frac{C \log(2)}{- \origlog_2(\frac{5}{2} D G \eta)}
\right|_{\eta_{i+1}}^{\eta_i}
.
\end{equation*}
For the exponentially spaced grid in \eqref{eqn:grid}, this evaluates to the prior weights $\pi_1^{\eta_i}$ specified there.

\section{Projection of Gaussians}\label{gausproj}

It is well-known that the projection of a Gaussian onto the
set of distributions with mean in the convex set $\U$ is also a Gaussian
with the same covariance matrix. This result follows easily from, for
instance, Theorems~1.8.5 and 1.8.2 of \citeauthornumber{ihara1993information}, but
we include a short proof for completeness:
\begin{lemma}\label{lem:projectGaussian}
  Let $\tilde{P}_t = \normal(\vmu,\Sigma)$ be Gaussian and let \[P_t =
  \argmin_{P \colon \vmu_P \in \U} \KL\delcc{P}{\tilde{P}_t}\] be its
  projection onto the set of distributions with mean in $\U$. Then $P_t$
  is also Gaussian with the same covariance matrix:
  \begin{equation*}
    P_t = \normal(\vnu,\Sigma)     
  \end{equation*}
  for $\vnu \in \U$ that minimizes $\frac{1}{2} (\vnu - \vmu)^\top
  \Sigma^{-1} (\vnu - \vmu)$.
\end{lemma}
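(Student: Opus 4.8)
The plan is to reduce the problem to Gaussian candidates via the maximum-entropy characterisation of the Gaussian, and then to minimise the Kullback--Leibler divergence over Gaussians by an explicit computation that decouples into a covariance part and a mean part. For the reduction, observe that for any distribution $P$ on $\reals^d$ with a density,
\[
\KL\delcc{P}{\tilde P_t}
~=~ - H(P) + \frac{d}{2}\log(2\pi) + \frac{1}{2} \log\det \Sigma
+ \frac{1}{2} \E_P\sbr*{(\u - \vmu)^\top \Sigma^{-1} (\u - \vmu)},
\]
where $H$ denotes differential entropy. Expanding the quadratic form, the last expectation equals $\Tr\del*{\Sigma^{-1} C_P} + (\vmu_P - \vmu)^\top \Sigma^{-1} (\vmu_P - \vmu)$, where $\vmu_P$ and $C_P$ are the mean and covariance of $P$, so it depends on $P$ only through its first two moments. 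Since among all distributions with prescribed mean and covariance the Gaussian uniquely maximises $H$ --- equivalently $\KL\delcc{P}{\normal(\vmu_P, C_P)} \ge 0$ gives $H(P) \le H(\normal(\vmu_P, C_P))$ whenever the latter is finite --- replacing $P$ by the Gaussian with matching first two moments can only decrease $\KL\delcc{P}{\tilde P_t}$. Distributions for which $\KL\delcc{P}{\tilde P_t} = +\infty$ (including those without a density or without a finite second moment) are irrelevant, since the value is finite for any Gaussian with mean in $\U$. Hence it suffices to minimise over $\normal(\vnu, C)$ with $\vnu \in \U$ and $C \succ \zeros$.

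For such candidates the divergence takes the familiar closed form
\[
\KL\delcc{\normal(\vnu, C)}{\normal(\vmu, \Sigma)}
~=~ \frac{1}{2}\del*{\Tr\del*{\Sigma^{-1} C} - d - \log \det\del*{\Sigma^{-1} C}}
+ \frac{1}{2} (\vnu - \vmu)^\top \Sigma^{-1} (\vnu - \vmu),
\]
in which the two summands are nonnegative and depend on disjoint variables, so they can be minimised independently. Writing $M = \Sigma^{-1/2} C \Sigma^{-1/2} \succ \zeros$, the first summand equals $\frac{1}{2}\del*{\Tr M - \log\det M - d} \ge 0$ with equality iff $M = \I$, i.e.\ $C = \Sigma$; this is the scalar fact $x - \log x \ge 1$ for $x > 0$ applied to the eigenvalues of $M$. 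The second summand does not involve $C$ and is, by definition, minimised over $\vnu \in \U$ by the $\vnu$ in the statement, which exists and is unique because $\U$ is closed and convex and $\vnu \mapsto (\vnu-\vmu)^\top \Sigma^{-1}(\vnu-\vmu)$ is strictly convex and coercive. Combining the two, the unique minimiser is $P_t = \normal(\vnu, \Sigma)$, as claimed.

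The only delicate point is the maximum-entropy step: one must check it applies to an arbitrary probability measure $P$, not only one with a density, and handle the cases where $H(P)$ or $\E_P\sbr*{\norm{\u}^2}$ is infinite. This is classical --- the inequality $H(P) \le H(\normal(\vmu_P, C_P))$ is immediate from nonnegativity of $\KL\delcc{P}{\normal(\vmu_P, C_P)}$ whenever the right-hand side is finite, and the remaining degenerate cases contribute $+\infty$ to the objective and can be discarded --- so the argument presents no real obstacle, and the rest is elementary optimisation.
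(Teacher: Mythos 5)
Your proof is correct, but it takes a genuinely different route from the paper's. The paper introduces the auxiliary Gaussian $R = \normal(\vnu,\Sigma)$ that shares the mean $\vnu$ of $P$ and the covariance $\Sigma$ of $\tilde P_t$, and observes that $\log\frac{\der R}{\der \tilde P_t}$ is \emph{affine} in $\u$ (the quadratic parts cancel because the covariances match); since $P$ and $R$ have the same mean, $\E_P\sbr{\log\tfrac{\der R}{\der \tilde P_t}} = \E_R\sbr{\log\tfrac{\der R}{\der \tilde P_t}}$, which gives the exact Pythagorean identity $\KL\delcc{P}{\tilde P_t} = \KL\delcc{P}{R} + \KL\delcc{R}{\tilde P_t}$ and hence the conclusion in a single stroke, without ever invoking the Gaussian maximum-entropy property or the explicit closed form for $\KL$ between two Gaussians. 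You instead proceed in two stages: first reduce the competitors to Gaussians via max-entropy (after observing the cross-entropy term depends on $P$ only through its first two moments), and then separately minimise the covariance part (via $x - \log x \ge 1$ applied to eigenvalues of $\Sigma^{-1/2} C \Sigma^{-1/2}$) and the mean part. Both arguments are sound. The paper's decomposition is tighter and hides all the moment/density technicalities inside ``nonnegativity of KL'' applied once to $\KL\delcc{P}{R}$; your version is more mechanical and has the virtue of making the measure-theoretic caveats explicit (a distribution with $\KL\delcc{P}{\tilde P_t}<\infty$ automatically has a density and finite second moment, which you gesture at; a clean way to justify discarding the degenerate cases is the Donsker--Varadhan variational bound $\E_P[g] \le \KL\delcc{P}{\tilde P_t} + \log \E_{\tilde P_t}[e^g]$ with $g(\u) = \epsilon\norm{\u}^2$ for small $\epsilon>0$, showing finite KL forces a finite second moment).
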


\begin{proof}
  Let $P$ be an arbitrary distribution with mean $\vnu \in \U$, and let $R =
  \normal\del{\vnu,\Sigma}$. Then by straight-forward algebra and
  nonnegativity of Kullback-Leibler divergence it can be verified that
  \begin{equation*}
    \KL\delcc{P}{\tilde{P}_t}
      = \KL\delcc{P}{R} + \KL\delcc{R}{\tilde{P}_t}
      \geq \KL\delcc{R}{\tilde{P}_t}.
  \end{equation*}
  Thus the minimum over all $P$ is achieved by a Gaussian with the same
  covariance matrix as $\tilde{P}_t$. It remains to find the mean of the
  projection, which is the $\vnu \in \U$ that minimizes
  \begin{equation*}
    \KL\delcc{R}{\tilde{P}_t}
      = \frac{1}{2} (\vnu - \vmu)^\top \Sigma^{-1} (\vnu - \vmu),
  \end{equation*}
  as required.
\end{proof}

\section{Gaussian Exp-concavity}\label{app:gauss.exp.concave}

Exp-concavity of $\surr_t^\eta(\u)$ means that $\ex \sbr*{ e^{-
\surr_t^\eta(\u)} } \le e^{- \surr_t^\eta(\vmu)}$ for any distribution
on $\u \in \reals^d$. Although this does not hold for general
distributions with support outside of $\U$, it does hold if we restrict
attention to certain types of Gaussians:
\begin{lemma}[Gaussian exp-concavity]\label{lem:gauss.exp.concave}
Let $0 < \eta \leq \frac{1}{5 D G}$.
Consider a Gaussian distribution with mean $\vmu \in \U$ and arbitrary covariance
$\Sigma \succ \zeros$ in the full case or diagonal $\Sigma \succ \zeros$
in the diagonal case. Then
\[
\ex_{\u \sim \normal(\vmu, \Sigma)} \sbr*{
  e^{- \surr_t^\eta(\u)}
}
~\le~
e^{- \surr_t^\eta(\vmu)}
\]
\end{lemma}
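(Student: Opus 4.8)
The plan is to evaluate the Gaussian expectation in closed form and then reduce to an elementary scalar inequality. Fix a round $t$, abbreviate $\M = \M_t$ and $\grad = \grad_t$, and substitute $\u = \vmu + \x$ with $\x \sim \normal(\zeros, \Sigma)$. Expanding the quadratic in the definition~\eqref{eq:surrogate} of $\surr_t^\eta$ gives
\[
-\surr_t^\eta(\vmu + \x) ~=~ -\surr_t^\eta(\vmu) - a^\top \x - \eta^2 \x^\top \M \x , \qquad a ~\df~ \eta \grad + 2\eta^2 \M(\vmu - \w_t) ,
\]
so the lemma is equivalent to $\ex_{\x \sim \normal(\zeros,\Sigma)} \sbr*{ e^{-a^\top \x - \eta^2 \x^\top \M \x} } \le 1$. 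Since $\M \succeq \zeros$ and $\Sigma \succ \zeros$, the matrix $\Sigma^{-1} + 2\eta^2 \M$ is positive definite, and completing the square under the Gaussian integral yields the standard identity
\[
\ex_{\x \sim \normal(\zeros,\Sigma)} \sbr*{ e^{-a^\top \x - \eta^2 \x^\top \M \x} } ~=~ \det\del*{\I + 2\eta^2 \M \Sigma}^{-1/2}\, e^{\half\, a^\top \del*{\Sigma^{-1} + 2\eta^2 \M}^{-1} a} .
\]
Hence it suffices to prove $a^\top (\Sigma^{-1} + 2\eta^2 \M)^{-1} a \le \ln\det(\I + 2\eta^2 \M \Sigma)$.

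The next step is to reduce this matrix inequality to a scalar one. In the full case $\M = \grad \grad^\top$, so $a = \eta(1 + 2\eta r)\grad$ with $r \df (\vmu - \w_t)^\top \grad$; writing $s \df \grad^\top \Sigma \grad \ge 0$, the rank-one Sherman--Morrison and matrix-determinant identities give $a^\top (\Sigma^{-1} + 2\eta^2 \M)^{-1} a = \eta^2 (1+2\eta r)^2 \frac{s}{1 + 2\eta^2 s}$ and $\ln\det(\I + 2\eta^2 \M \Sigma) = \ln(1 + 2\eta^2 s)$, so with $x \df 2\eta^2 s \ge 0$ the claim becomes
\[
(1 + 2\eta r)^2\, \frac{x}{2(1+x)} ~\le~ \ln(1 + x) .
\]
In the diagonal case $\M$ and $\Sigma$ are both diagonal, the matrices $(\Sigma^{-1}+2\eta^2\M)^{-1}$ and $\I + 2\eta^2\M\Sigma$ are diagonal, and $a^\top (\Sigma^{-1} + 2\eta^2 \M)^{-1} a - \ln\det(\I + 2\eta^2 \M \Sigma)$ splits into a sum over coordinates $i$ of precisely the same expression, now built from $\grads_{t,i}$, the $i$-th coordinates of $\vmu$ and $\w_t$, and $\Sigma_{ii}$ in place of $\grad$, $\vmu$, $\w_t$, $\Sigma$ (equivalently: for diagonal $\Sigma$ and $\M$ the whole integral factorizes over dimensions into $d$ copies of the one-dimensional full case). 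So in both cases we are left with the single scalar inequality displayed above.

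Finally I would close with elementary estimates. By Cauchy--Schwarz and the boundedness assumptions~\eqref{eq:B}, $|r| \le D G$ (and likewise each $|r_i| \le D G$), while $\eta \le \frac{1}{5 D G}$, so $|2\eta r| \le \frac{2}{5}$ and therefore $(1 + 2\eta r)^2 \le (\frac{7}{5})^2 = \frac{49}{25} \le 2$. Combined with the elementary bound $\frac{x}{1+x} \le \ln(1+x)$ for all $x \ge 0$ (which holds because $\ln(1+x) - \frac{x}{1+x}$ vanishes at $x=0$ and has nonnegative derivative $\frac{x}{(1+x)^2}$), this gives $(1+2\eta r)^2 \frac{x}{2(1+x)} \le \frac{x}{1+x} \le \ln(1+x)$, which establishes the scalar inequality and hence the lemma. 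The only point where the precise learning-rate cap $\frac{1}{5 D G}$ from the grid~\eqref{eqn:grid} is used is the estimate $(1 + 2\eta r)^2 \le 2$; everything else is routine Gaussian calculus together with the rank-one matrix identities, so I expect that constant-chasing to be the main (and still quite minor) obstacle.
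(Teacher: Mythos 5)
Your proof is correct and takes essentially the same route as the paper: both reduce to the identical scalar inequality $(1+2\eta r)^2\,\frac{x}{2(1+x)} \le \ln(1+x)$ with $x = 2\eta^2\,\grad_t^\top\Sigma\grad_t$, and close it with the same two estimates $(1+2\eta r)^2 \le 2$ (from $\eta \le \frac{1}{5DG}$ and $|r| \le DG$) and $\frac{x}{1+x}\le\ln(1+x)$, together with the same coordinate-wise factorization in the diagonal case. The only cosmetic difference is that you reach the scalar reduction via the $d$-dimensional completion of squares followed by Sherman--Morrison and the matrix-determinant lemma, whereas the paper observes upfront that $\surr_t^\eta(\vmu)-\surr_t^\eta(\u)$ depends on $\u$ only through the scalar $s=(\vmu-\u)^\top\grad_t\sim\normal(0,\grad_t^\top\Sigma\grad_t)$ and evaluates a one-dimensional Gaussian moment-generating function directly.
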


\begin{proof}
We first consider the full case.
Abbreviating $r \df (\w_t - \vmu)^\top \grad_t$ and $s \df (\vmu -
\u)^\top \grad_t$, from the definition \eqref{eq:surrogate} of
$\surr_t^\eta$ we get
\begin{align*}
&
\surr_t^\eta(\vmu)- \surr_t^\eta(\u)
\\
&~=~
\eta(\vmu-\u)^\top \grad_t
- \eta^2 \del*{
  2(\vmu - \w_t)^\top \grad_t \grad_t^\top  (\vmu-\u)
  + (\vmu-\u)^\top \grad_t \grad_t^\top  (\vmu-\u)
}
\\
&~=~
\eta s
- \eta^2 \del*{2 r s + s^2}.
\end{align*}
Since $\u \sim \normal(\vmu, \Sigma)$ implies $s \sim \normal(0, v)$ with $v= \grad_t^\top \Sigma \grad_t$, the claim collapses to
\[
1
~\ge~
\ex_{\u \sim \normal(\vmu, \Sigma)} \sbr*{
  e^{\surr_t^\eta(\vmu)- \surr_t^\eta(\u)}
}
~=~
\ex_{s \sim \normal(0, v)} \sbr*{
  e^{
    \eta s
    - \eta^2 \del*{
       2 r s + s^2
    }
  }
}
~=~
\frac{e^{\frac{\eta ^2 v (1 - 2  \eta  r)^2}{2 (1 + 2  \eta^2 v)}}}{\sqrt{1 + 2  \eta ^2 v}}
,
\]
which is equivalent to
\[
(1 - 2 \eta r)^2 \eta^2 v
~\le~
\del*{1 + 2 \eta^2 v} \ln \del*{1 + 2 \eta^2 v}
.
\]
The left-hand side is maximized over $r \in [-\Dfull \Gfull,\Dfull \Gfull]$ at the point
$r=-\Dfull \Gfull$. So it suffices to establish
\[
v (1 + 2 \eta \Dfull \Gfull)^2 \eta^2
~\le~
\del*{1 + 2 \eta^2 v} \ln \del*{1 + 2 \eta^2 v}
.
\]
Now the right-hand is convex in $v$ and hence bounded below by its
tangent at $v=0$, which is $2 \eta^2 v$. The proof is completed by
observing that $(1 + 2 \eta \Dfull \Gfull)^2 \le 2$ by the assumed
bound on $\eta$.

It remains to consider the diagonal case. There
 the surrogate loss \eqref{eq:surrogate} is a sum over dimensions, say $\surr_t^\eta(\u) = \sum_{i=1}^d \surr_{t,i}^\eta(u_i)$. For a Gaussian with diagonal covariance matrix $\Sigma$ the coordinates of $\u$ are independent, and hence
\[
\ex_{\u \sim \normal(\vmu, \Sigma)} \sbr*{
  e^{- \surr_t^\eta(\u)}
}
~=~
\prod_{i=1}^d
\ex_{u_i \sim \normal(\mu_i, \oldSigma_{i,i})} \sbr*{
  e^{- \surr_{t,i}^\eta(u_i)}
}
~\le~
\prod_{i=1}^d e^{- \surr_{t,i}^\eta(\mu_i)}
~=~
e^{- \surr_t^\eta(\vmu)}
,
\]
where the inequality is the result for the full case applied to each
dimension separately.
\end{proof}

\section{Bernstein for Linearized Excess Loss}\label{sec:bnst}
Let $f : \U \to \reals$ be a convex function drawn from distribution $\pr$ with stochastic optimum $\u^* = \argmin_{\u \in \U} \E_{f \sim
\pr}[f(\u)]$. For any $\w \in \U$, we now show that the Bernstein condition for the excess loss $X \df f(\w)-f(\u^*)$ implies the Bernstein condition with the same exponent $\beta$ for the linearized excess loss $Y \df (\w-\u^*)^\top \nabla f(\w)$. These variables satisfy $Y \ge X$ by convexity of $f$ and $Y \le C \df \Dfull \Gfull$. 

\begin{lemma}
For $\beta \in (0,1]$, let $X$ be a $(B,\beta)$-Bernstein random variable:
\[
\ex[X^2] \le B \ex[X]^\beta.
\]
Then any bounded random variable $Y \le C$ with $Y \ge X$ pointwise satisfies the $(B',\beta)$-Bernstein condition
\[
\ex[Y^2] \le B' \ex[Y]^\beta
\]
for $B' = \max \set*{
B,
\frac{2}{\beta}
C^{2-\beta}
}
$.
\end{lemma}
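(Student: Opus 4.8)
The plan is to first settle well-definedness and a trivial case. The hypothesis $0\le\ex[X^2]\le B\ex[X]^\beta$ forces $\ex[X]\ge 0$ (for $\beta<1$ this is needed even to parse the right-hand side; for $\beta=1$ it follows from $\ex[X^2]\ge 0$), and since $Y\ge X$ pointwise we get $\ex[Y]\ge\ex[X]\ge 0$, so the claimed bound is meaningful. If $\ex[Y]=0$ then $\ex[X]=0$, hence $\ex[X^2]=0$, hence $X=0$ a.s., hence $Y\ge 0$ a.s., hence $Y=0$ a.s. and $\ex[Y^2]=0$. So I would assume $\ex[Y]>0$ henceforth.

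The heart of the argument is to bound the \emph{difference} $\ex[Y^2]-\ex[X^2]$ instead of $\ex[Y^2]$ directly. I claim the pointwise inequality $Y^2-X^2\le 2C(Y-X)$: writing $Y^2-X^2=(Y-X)(Y+X)$ with $Y-X\ge 0$, on $\{Y\ge 0\}$ we have $Y+X\le 2Y\le 2C$, so $(Y-X)(Y+X)\le 2C(Y-X)$; on $\{Y<0\}$ we have $X\le Y<0$, so $|X|\ge|Y|$, so $Y^2-X^2\le 0\le 2C(Y-X)$. Taking expectations and applying the Bernstein hypothesis $\ex[X^2]\le B\ex[X]^\beta$ gives $\ex[Y^2]\le g(\ex[X])+2C\,\ex[Y]$, where $g(x)\df Bx^\beta-2Cx$ is concave on $[0,\infty)$ (since $\beta\le 1$), with global maximum at $x^*\df(\beta B/(2C))^{1/(1-\beta)}$ and $g(x^*)=(1-\beta)B(x^*)^\beta$; for $\beta=1$ one reads $g(x)=(B-2C)x$.

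It then remains to bound $g(\ex[X])+2C\,\ex[Y]$ by $\max\{B,\tfrac2\beta C^{2-\beta}\}\,\ex[Y]^\beta$ using only $0\le\ex[X]\le\ex[Y]\le C$. I would split on the threshold. If $B\ge\tfrac2\beta C^{2-\beta}$, then $x^*\ge C\ge\ex[Y]\ge\ex[X]$, so $g$ is increasing on $[\ex[X],\ex[Y]]$, giving $g(\ex[X])\le g(\ex[Y])=B\ex[Y]^\beta-2C\ex[Y]$ and hence $\ex[Y^2]\le B\ex[Y]^\beta$. If $B<\tfrac2\beta C^{2-\beta}$: when $\ex[Y]\le x^*$ the same monotonicity again gives $\ex[Y^2]\le B\ex[Y]^\beta\le\tfrac2\beta C^{2-\beta}\ex[Y]^\beta$; when $\ex[Y]>x^*$, concavity gives $g(\ex[X])\le g(x^*)=(1-\beta)B(x^*)^\beta<(1-\beta)B\,\ex[Y]^\beta\le\tfrac{2(1-\beta)}{\beta}C^{2-\beta}\ex[Y]^\beta$, and combining with $2C\,\ex[Y]\le 2C^{2-\beta}\ex[Y]^\beta$ (valid since $\ex[Y]\le C$) the two contributions sum to exactly $\tfrac2\beta C^{2-\beta}\ex[Y]^\beta$. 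The case $\beta=1$ is simpler still: there $g(x)=(B-2C)x\le 0$ when $B\le 2C$, so $\ex[Y^2]\le 2C\,\ex[Y]$.

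The one nontrivial point — the main obstacle — is finding the right organization: realizing that one should compare $\ex[Y^2]-\ex[X^2]$ (not $\ex[Y^2]$) to an affine function of $Y-X$, and then that the elementary one-variable maximization of $g$, together with the threshold $B\lessgtr\tfrac2\beta C^{2-\beta}$ controlling whether $x^*\ge C$, is exactly what turns a naive \emph{sum} of the two constants into their \emph{maximum}. Once that is seen, every step is routine.
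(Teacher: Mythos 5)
Your proof is correct, and it takes a genuinely different route from the paper's. The paper linearizes $\ex[X]^\beta$ using the variational identity $z^\beta = c_\beta \inf_{\gamma>0}\bigl(z/\gamma + \gamma^{\beta/(1-\beta)}\bigr)$ (which it had already introduced for Theorem~\ref{thm:Bernstein}), picks $\gamma = \bigl(\tfrac{1-\beta}{\beta}\ex[Y]\bigr)^{1-\beta}$ so the identity is tight at $\ex[Y]$, and then invokes monotonicity of $x \mapsto x^2 - (c_\beta B'/\gamma)\,x$ on $x \le C$; the defining inequality for $B'$ is exactly what makes this slope large enough. Your approach instead applies the monotonicity observation directly with the fixed quadratic $h(x) = x^2 - 2Cx$ to obtain $\ex[Y^2] \le \ex[X^2] + 2C(\ex[Y]-\ex[X])$, then reduces to the elementary maximization of $g(x) = Bx^\beta - 2Cx$ over $[0,\ex[Y]]$, with a case split governed by whether the interior maximizer $x^*$ falls inside $[0,C]$. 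Both proofs ultimately rest on the same pointwise fact (a concave quadratic with vertex at or beyond $C$ is monotone where $X$ and $Y$ live), but yours is more elementary and self-contained --- no duality trick, and it naturally explains \emph{why} the constant is a maximum rather than a sum --- at the cost of a short case analysis and a separate treatment of $\beta=1$; the paper's is shorter once the variational formula is in hand, and handles $\beta=1$ by a limiting argument. One cosmetic remark: your $\ex[Y]=0$ preamble, while careful, can be dropped since the final inequality $\ex[Y^2]\le B'\ex[Y]^\beta$ is trivially $0\le0$ there and the main argument never actually divides by $\ex[Y]$.
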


\begin{proof}
For $\beta \in (0,1)$ we will use the fact that
\[
  z^\beta
  ~=~ 
  c_\beta \inf_{\gamma > 0} \del*{\frac{z}{\gamma} +
   \gamma^\frac{\beta}{1-\beta}}
  \qquad \text{for any $z \geq 0$,}
\]
with $c_\beta =   (1-\beta)^{1-\beta} \beta^\beta$.
For $\gamma = \del*{\frac{1-\beta}{\beta} \ex[Y]}^{1-\beta}$ we therefore have
\begin{align}
\notag
\ex[X^2] - B' \ex[X]^\beta
&~\ge~
\ex[X^2] - B' c_\beta \del*{\frac{\ex [X]}{\gamma}  + \gamma^{\frac{\beta}{1-\beta}}}
\\
\notag
&~\ge~
\ex[Y^2] - B' c_\beta \del*{\frac{\ex [Y]}{\gamma}  + \gamma^{\frac{\beta}{1-\beta}}}
\\
\label{eq:tolimitme}
&~=~
\ex[Y^2] - B' \ex[Y]^\beta
,
\end{align}
where the second inequality holds because $x^2 - c_\beta B' x/\gamma$ is a decreasing function of $x \le C$ for $\gamma  \le \frac{c_\beta B'}{2 C}$, which is satisfied by the choice of $B'$.
This proves the lemma for $\beta \in (0,1)$. The claim for $\beta=1$ follows by taking the limit $\beta \to 1$ in \eqref{eq:tolimitme}.
\end{proof}

\end{document}